\documentclass[a4paper,fleqn]{cas-dc}

\usepackage[numbers]{natbib}

\def\tsc#1{\csdef{#1}{\textsc{\lowercase{#1}}\xspace}}
\tsc{WGM}
\tsc{QE}

\newtheorem{theorem}{Theorem}
\newproof{proof}{Proof}

\usepackage{algorithm}
\usepackage{algorithmic}

\begin{document}
\let\WriteBookmarks\relax
\def\floatpagepagefraction{1}
\def\textpagefraction{.001}

\shorttitle{CARD: Cyclic Adversarial Robust Distillation}

\shortauthors{Z. Li et al.}

\title [mode = title]{
CARD: Closing the Teacher-Student Loop for Adversarial Robustness Distillation}

\author[1]{Zhi Li}[orcid=0009-0009-7160-7145]
\ead{lizhi@stu.ynu.edu.cn}
\credit{Writing-Original Draft, Methodology, Visualization, Data Curation}

\author[1]{Haowei Liu}[orcid=0009-0006-1944-5941]
\ead{liuhaowei@stu.ynu.edu.cn}
\credit{Investigation, Data Curation}

\author[1]{Hongchen Yang}[orcid=0009-0000-2755-4052]
\ead{yhc@stu.ynu.edu.cn}
\credit{Investigation, Validation}

\author[2]{Xiaoxuan Wang}[orcid=0000-0002-0922-4719]
\ead{wangxiaoxuan@ynnu.edu.cn}
\credit{Conceptualization, Methodology}

\author[1]{Song Gao}[orcid=0000-0002-7169-6370]
\cormark[1]
\ead{gaos@ynu.edu.cn}
\credit{Conceptualization, Methodology, Writing-Original Draft}

\author[1]{Zhenli He}[orcid=0000-0002-7986-2222]
\ead{hezl@ynu.edu.cn}
\credit{Conceptualization, Methodology, Funding Acquisition}

\author[1]{Wei Zhou}[orcid=0000-0002-5881-9436]
\ead{zwei@ynu.edu.cn}
\credit{Funding Acquisition}

\author[1]{Shaowen Yao}[orcid=0000-0003-1516-4246]
\ead{yaosw@ynu.edu.cn}
\credit{Funding Acquisition}

\affiliation[1]{
    organization={Engineering Research Center of Cyberspace and School of Software and AI, Yunnan University},
    city={Kunming},
    country={China}
}
\affiliation[2]{
organization={School of Information Science and Technology, Yunnan Normal University},
city={Kunming},
country={China}
}

\cortext[1]{Corresponding author}

\begin{abstract}
Adversarial robustness distillation typically treats a robust teacher as a fixed knowledge source and transfers robustness to a compact student in a one-way manner. This paradigm terminates teacher-student interaction once the student is distilled, thereby discarding potentially useful feedback from the distilled student. However, exploiting such feedback is nontrivial because indiscriminate weak-to-strong transfer may propagate student errors to the teacher. To address this issue, we propose Cyclic Adversarial Robustness Distillation (CARD), a closed-loop framework that turns the distilled student into a feedback source for refining the robust teacher and subsequently transfers the teacher gains back to the same student. CARD first performs strong-to-weak adversarial distillation and then uses reliability-enhanced student feedback to refine the robust teacher.
The refinement combines inverse adversarial student feedback with teacher-student discrepancy maximization, after which the refined teacher is used to continue training the same student under the unchanged base distillation objective. We further establish a sample-level sufficient condition for reliable weak-to-strong feedback transfer and derive a distribution-level upper bound on the refined teacher's risk. Experiments on CIFAR-10, CIFAR-100, and TinyImageNet-10 across multiple teacher-student architectures and base distillation methods show that CARD improves adversarial robustness in most evaluated settings while generally preserving clean accuracy.
These results show that the distilled student can provide useful feedback to the teacher, enabling robustness transfer to proceed beyond the conventional one-way teacher-to-student paradigm.

\end{abstract}

\begin{highlights}
\item Adversarial distillation is extended from one-way transfer to closed-loop feedback.
\item Reliability-enhanced student feedback guides targeted teacher refinement.
\item Theory characterizes when weak-to-strong feedback preserves correct predictions.
\item Refined-teacher gains improve students without redesigning the base objective.
\end{highlights}

\begin{keywords}
 Adversarial Robustness Distillation\sep Adversarial Robustness\sep Knowledge Distillation\sep Weak-to-Strong Learning\sep Teacher-Student Feedback
\end{keywords}

\maketitle

\section{Introduction}\label{}

Lightweight neural networks are increasingly deployed on mobile, edge, and embedded devices because of their low computational and storage costs~\cite{ad_mtard_eccv2022}.
However, their limited capacity makes it difficult to maintain both clean accuracy and adversarial robustness~\cite{att_CUI2026132185,AD_STARSHIP_eccv2024,AD_smaraad_2024_CVPR}.
Adversarial training improves robustness by optimizing models against adversarial perturbations~\cite{atpgd_ICLRmadry2018}, but its effectiveness often relies on substantial model capacity and computational cost~\cite{AD_iad2022iclr,ad_IBD_neurips2023}.
Adversarial robustness distillation provides a practical alternative by transferring robust knowledge from a high-capacity robust teacher to a compact student~\cite{AD_ard2020aaai}.

Existing adversarial-distillation methods mainly focus on improving the transfer of robust knowledge from teacher to student.
Representative approaches employ robust soft labels~\cite{AD_rslad2021iccv}, adaptive teacher guidance~\cite{AdaAD2023cpvr}, feature or gradient alignment~\cite{ad_gacd_tifs2023,AD_IGDM_2025ICLR}, and adversarial trajectory modeling~\cite{AD_DARWIN_2024_CVPR}.
Despite these advances, most methods still follow a one-way teacher-to-student paradigm in which the teacher remains fixed throughout distillation.
Once distillation is completed, the student becomes the endpoint of the transfer process. This one-way formulation may leave potentially useful teacher-student interactions underexploited. Although the distilled student remains weaker than the robust teacher overall, its predictions can still provide complementary feedback, while teacher-student disagreement can reveal inputs on which their behaviors differ substantially. This observation motivates us to investigate whether feedback from the distilled student can be used to refine a stronger robust teacher. However, weak-to-strong feedback can be risky, as aligning a strong teacher with predictions from a weaker student may propagate student errors and degrade the teacher.
The central challenge is therefore to exploit student feedback while reducing the risk of erroneous knowledge transfer and focusing teacher refinement on informative regions of the input space.

In this paper, we propose Cyclic Adversarial Robustness Distillation (CARD), a closed-loop framework that extends conventional one-way robustness transfer into a teacher-student feedback cycle.
As illustrated in Figure~\ref{fig:motivation}, CARD consists of strong-to-weak adversarial distillation, weak-to-strong teacher refinement, and subsequent strong-to-weak gain transfer.
CARD first trains a compact student under the guidance of a pretrained robust teacher using an existing adversarial-distillation objective.
The distilled student is then frozen and used to refine the teacher through two complementary mechanisms.
For clean-input refinement, CARD constructs inverse adversarial examples that reduce the student's classification loss and uses the resulting student predictions to supervise the teacher on the corresponding clean inputs.
For robust refinement, CARD identifies perturbed inputs by maximizing the discrepancy between the clean student prediction and the teacher prediction on the perturbed input, and then uses the clean student prediction to supervise the teacher on these inputs. After refinement, the teacher is frozen and used to continue training the same student from its previous checkpoint under the unchanged base distillation objective. In this way, CARD reorganizes the teacher-student training process into a closed feedback loop without requiring a redesign of the base distillation loss.

A key question is whether feedback from a weaker student can be used to refine a stronger teacher without compromising correct predictions. We therefore provide theoretical analysis of the weak-to-strong refinement process. At the sample level, we establish a sufficient condition under which aligning the teacher with student feedback preserves the correct prediction. At the distribution level, we further derive an upper bound on the refined teacher's classification risk. These results characterize when weak-to-strong feedback can be transferred reliably and provide theoretical support for the proposed closed-loop formulation.

The main contributions of this work are summarized as follows:
\begin{itemize}
    \item We introduce CARD, a closed-loop adversarial robustness distillation framework in which a distilled student is used to refine the robust teacher, and the refined teacher then guides further training of the same student. CARD retains the original adversarial-distillation objective for student training, allowing it to augment existing methods without redesigning their base losses.

    \item We develop a student-feedback-based teacher-refinement mechanism that improves feedback quality and focuses refinement on regions of teacher-student disagreement. Inverse adversarial examples provide lower-loss student feedback for clean-input refinement, and  discrepancy maximization identifies adversarial inputs for robust refinement. We further establish a sample-level sufficient condition for preserving correct predictions under weak-to-strong feedback and derive a distribution-level upper bound on the refined teacher's classification risk.

    \item We conduct extensive experiments on CIFAR-10, CIFAR-100, and TinyImageNet-10 across multiple teacher-student architectures and base adversarial-distillation methods. CARD improves adversarial robustness over the corresponding base methods in most evaluated settings while generally preserving clean accuracy, and ablation studies further verify the effectiveness of the proposed teacher-refinement and closed-loop mechanisms.
\end{itemize}

\begin{figure}[pos=t]
    \centering
    \includegraphics[width=\columnwidth]{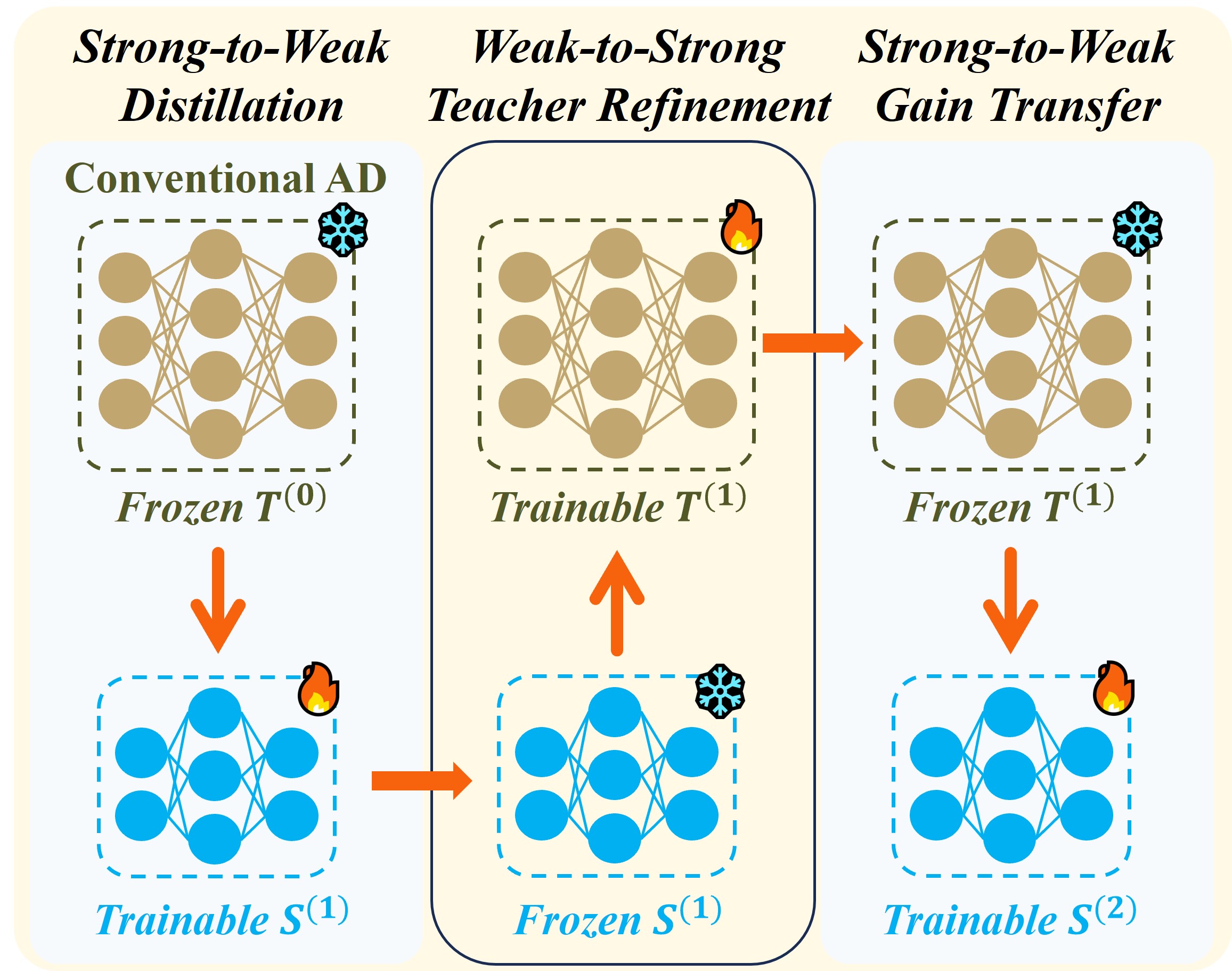}
    \caption{
    From one-way to closed-loop adversarial robustness distillation.
    Conventional adversarial distillation treats the robust teacher as a fixed knowledge source and transfers robustness only from teacher to student.
    CARD closes the teacher-student loop by using the distilled student to refine the robust teacher and then using the refined teacher to continue training the same student.
    Snowflake and flame symbols denote frozen and trainable networks, respectively.
    }
    \label{fig:motivation}
\end{figure}

\begin{figure*}[t]
    \centering
    \includegraphics[width=0.95\textwidth]{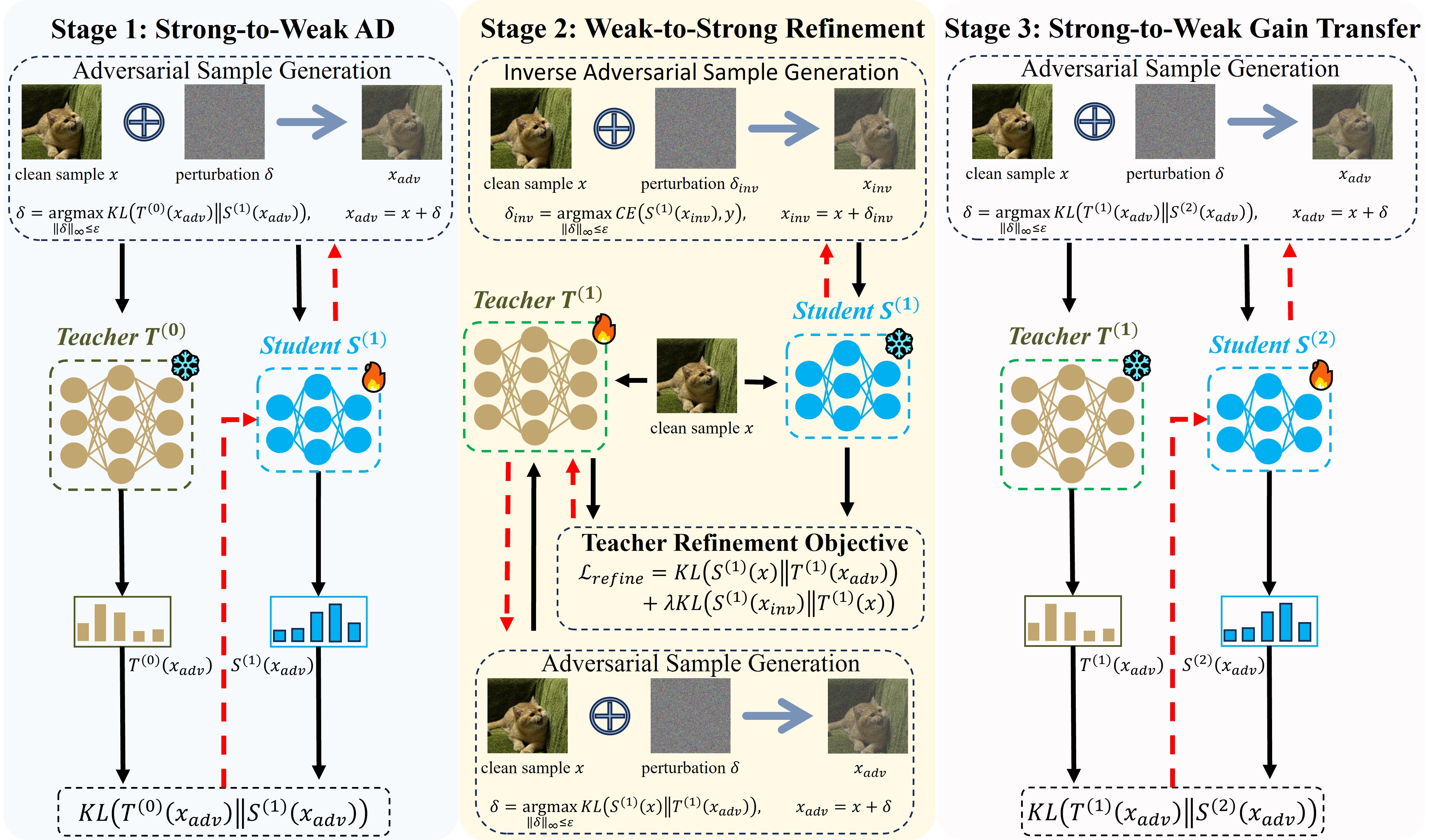}
    \caption{
        Overview of CARD.
        Stage 1 performs strong-to-weak adversarial distillation from a pretrained robust teacher to a compact student.
        Stage 2 freezes the first-round student and uses inverse adversarial examples and teacher adversarial examples to refine the pretrained teacher.
        Stage 3 replaces the initial teacher with the refined teacher and continues training from the first-round student checkpoint, transferring the teacher improvement back to the student.
    }
    \label{fig:framework}
\end{figure*}

\section{Related Work}

In this section, we review the three lines of research most closely related to CARD, including adversarial training, adversarial robustness distillation, and teacher adaptation with weak-to-strong feedback.

\subsection{Adversarial Training}
Adversarial training improves model robustness by training against adversarial examples generated within a prescribed perturbation set.
PGD-AT~\cite{atpgd_ICLRmadry2018} formulated adversarial training as a min--max optimization problem and has become a widely adopted baseline for adversarial robustness.
Subsequent studies have improved adversarial training from different perspectives, including adapting attack difficulty during training~\cite{at_CAT_ijcai2018,at_fat_icml2020,at_dart_icml2019,at_lasat_cvpr2022}, mitigating the trade-off between clean accuracy and adversarial robustness~\cite{at_tradesicml2019,ARREST_ATwithKD_ICCV2023,AR-AT_ICLR2025}, and redesigning the training objective~\cite{at_score_2022_ICML,at_dkl_neurips2024}.
Other approaches further enhance robust learning by exploiting informative training samples and local neighborhood structures~\cite{stat2023iclr,HFAT_2024_CVPR,at_uiat_cvpr2023,at_hat_iclr2022,at_vdat_neurips2025}.
Despite this progress, achieving strong adversarial robustness remains difficult for compact networks with limited representational capacity.
This limitation motivates adversarial robustness distillation, which transfers robust knowledge from a high-capacity teacher to a lightweight student.

\subsection{Adversarial Robustness Distillation}
Adversarial robustness distillation transfers robust knowledge from a pretrained teacher to a compact student, aiming to improve student robustness without increasing its model capacity.
ARD~\cite{AD_ard2020aaai} incorporates robust teacher supervision into the adversarial training of the student.
Subsequent methods improve teacher supervision through robust soft labels or adaptive guidance~\cite{AD_rslad2021iccv,AD_iad2022iclr}, and enrich knowledge transfer through feature, gradient, or adversarial-trajectory alignment~\cite{ad_gacd_tifs2023,AD_IGDM_2025ICLR,AD_DARWIN_2024_CVPR}.
Other approaches further adapt the distillation process to the current student.
PeerAiD~\cite{AD_peeraid2024cvpr} jointly trains an online peer tutor with the student, and AdaAD~\cite{AdaAD2023cpvr} exploits teacher-student discrepancy to generate adaptive adversarial examples for student training.

Despite these advances, existing methods primarily focus on improving knowledge transfer from the teacher to the student.
The distilled student is generally treated as the recipient of robust knowledge rather than as a source of feedback for refining the teacher after distillation.
Although both AdaAD and CARD exploit teacher-student discrepancy, the discrepancy serves different roles in the two frameworks.
AdaAD uses it to generate adaptive adversarial examples for student training, whereas CARD employs discrepancy maximization in a separate weak-to-strong stage to identify perturbed inputs where the frozen student's predictions are used to refine the robust teacher.

\subsection{Teacher Adaptation and Weak-to-Strong Learning}
Several recent adversarial-distillation methods move beyond a strictly fixed supervisory model.
PeerAiD~\cite{AD_peeraid2024cvpr} jointly trains an online peer tutor with the student, and CIARD~\cite{AD_ciard2025iccv} adopts a multi-teacher cyclic framework with clean and robust teachers and iteratively updates the robust teacher during student training.
These methods adapt the supervisory model concurrently with student learning.
CARD considers a different setting in which teacher refinement is performed after the student has completed an initial adversarial-distillation stage.
Freezing the distilled student provides a stable feedback source and decouples weak-to-strong refinement from the evolving student-training dynamics, allowing post-distillation student feedback and teacher-student discrepancies to be explicitly utilized for teacher refinement.
The resulting teacher is then used to provide updated robustness supervision to the same student, so that teacher refinement ultimately serves the further improvement of the compact student.

A related line of work, weak-to-strong learning, investigates whether supervision from a weaker model can improve a stronger model~\cite{w2s_first_icml2024,w2s_theoreticalAnalysis_NEURIPS2024,w2s_advw2s_neurips2026}.
Although this direction is closely related to the teacher-refinement stage of CARD, existing weak-to-strong studies typically focus on improving the stronger model itself.
CARD instead incorporates weak-to-strong feedback into adversarial robustness distillation, where improving the teacher is not the final objective but an intermediate step toward further enhancing the compact student's robustness.

\section{Methodology} \label{sec:methodology}
CARD extends conventional one-way adversarial robustness distillation into a closed teacher-student feedback loop.
Given a training set $\mathcal{D}=\{(x_i,y_i)\}_{i=1}^{N}$, a pretrained robust teacher $T^{(0)}$, and an initialized compact student $S^{(0)}$, CARD proceeds through three sequential stages.
The first strong-to-weak stage distills robust knowledge from $T^{(0)}$ to obtain a first-round student $S^{(1)}$.
The student is then frozen and used as a stable feedback source to refine the original teacher, producing $T^{(1)}$.
Finally, the refined teacher provides updated robustness supervision to the same student, yielding the final student $S^{(2)}$. Figure~\ref{fig:framework} shows the complete workflow.

Throughout this section, $\mathcal{L}_{\mathrm{base}}$ denotes the student-training objective of the selected adversarial-distillation method.
CARD retains the corresponding adversarial-example generation procedure and student-distillation objective in both strong-to-weak stages.

\begin{algorithm}[t]
\caption{CARD}
\label{alg:card}

\textbf{Input}:
Pretrained robust teacher \(T^{(0)}\), initialized student \(S^{(0)}\), training set \(\mathcal{D}\), base adversarial-distillation objective \(\mathcal{L}_{\mathrm{base}}\), and perturbation set \(\mathcal{B}_{\epsilon}\).\\
\textbf{Parameters}:
Strong-to-weak distillation epoch \(E_1\), strong-to-weak gain transfer epoch \(E_2\), teacher-refinement epoch \(E_T\), inverse and adversarial optimization steps \(K_{\mathrm{inv}}\) and \(K_{\mathrm{adv}}\), refinement weight \(\lambda\), and teacher learning rate \(\eta_T\).\\
\textbf{Output}:
Refined teacher \(T^{(1)}\) and final student \(S^{(2)}\).

\begin{algorithmic}[1]

\STATE \textbf{Stage 1: Strong-to-weak adversarial distillation}
\STATE Freeze \(T^{(0)}\) and make \(S^{(0)}\) trainable
\STATE
\(S^{(1)} \leftarrow
\operatorname{AdvDistill}
(S^{(0)},T^{(0)},\mathcal{D},E_1;\mathcal{L}_{\mathrm{base}})\)

\STATE \textbf{Stage 2: Weak-to-strong teacher refinement}
\STATE Freeze \(S^{(1)}\) and make \(T^{(0)}\) trainable

\FOR{\(e=1\) \TO \(E_T\)}
    \FOR{each mini-batch \((x,y)\) sampled from \(\mathcal{D}\)}

        \STATE Generate student inverse adversarial examples
        \(x_{\mathrm{inv}}\) using \(K_{\mathrm{inv}}\) steps
        according to Eq.~\eqref{eq: inverse-adversarial-example}

        \STATE Generate discrepancy-maximizing teacher adversarial
        examples \(x_{\mathrm{adv}}\) using
        \(K_{\mathrm{adv}}\) steps according to
        Eq.~\eqref{eq:teacher-student-discrepancy-max}

        \STATE
        \(\mathcal{L}_{\mathrm{rob}}
        \leftarrow
        D_{\mathrm{KL}}
        \bigl(S^{(1)}(x)\,\|\,T^{(1)}(x_{\mathrm{adv}})\bigr)\)

        \STATE
        \(\mathcal{L}_{\mathrm{nat}}
        \leftarrow
        D_{\mathrm{KL}}
        \bigl(S^{(1)}(x_{\mathrm{inv}})\,\|\,T^{(1)}(x)\bigr)\)

        \STATE
        \(\mathcal{L}_{\mathrm{refine}}
        \leftarrow
        \mathcal{L}_{\mathrm{rob}}
        +
        \lambda\mathcal{L}_{\mathrm{nat}}\)

        \STATE
        \(T^{(1)}
        \leftarrow
        T^{(1)}-\eta_T\nabla_{T^{(1)}}\mathcal{L}_{\mathrm{refine}}\)

    \ENDFOR
\ENDFOR

\STATE \textbf{Stage 3: Strong-to-weak gain transfer}
\STATE Freeze \(T^{(1)}\) and make \(S^{(1)}\) trainable
\STATE
\(S^{(2)} \leftarrow
\operatorname{AdvDistill}
(S^{(1)},T^{(1)},\mathcal{D},E_2;\mathcal{L}_{\mathrm{base}})\)

\STATE \textbf{return} \(T^{(1)}\) and \(S^{(2)}\)

\end{algorithmic}
\end{algorithm}

\subsection{Strong-to-Weak Adversarial Distillation}
\label{subsec:strong-to-weak-distillation}

CARD begins with conventional adversarial robustness distillation.
The pretrained teacher $T^{(0)}$ is frozen, and the student is optimized using the objective and adversarial-example generation procedure of the selected base method:
\begin{equation}
\label{eq:student-gain-transfer}
S^{(1)}
\leftarrow
\operatorname{AdvDistill}
\left(
S^{(0)},
T^{(0)},
\mathcal{D},
E_1;
\mathcal{L}_{\mathrm{base}}
\right),
\end{equation}
where $\operatorname{AdvDistill}\left( S^{(0)}, T^{(0)}, \mathcal{D},  E_1; \mathcal{L}_{\mathrm{base}} \right)$ denotes student training under the selected base adversarial-distillation method for $E_1$ epochs.
In our primary instantiation, 
\begin{equation}
\mathcal{L}_{\mathrm{base}}
=
\mathcal{L}_{\mathrm{AdaAD}}.
\end{equation}

Other adversarial-distillation objectives can be substituted without changing the subsequent CARD workflow.
Importantly, the purpose of this first stage is not only to obtain a compact robust student, but also to produce a task-adapted student that has already absorbed robust knowledge from the teacher and can subsequently serve as a meaningful feedback source.

\subsection{Weak-to-Strong Teacher Refinement}
\label{subsec:teacher-refinement}

After the initial distillation stage, we freeze $S^{(1)}$ and initialize the trainable teacher from $T^{(0)}$.
Although the student remains lower-capacity than the teacher, it has already acquired task-specific robust knowledge through the first strong-to-weak stage.
Freezing the student provides a stable post-distillation reference and separates teacher refinement from the evolving dynamics of student training.

Weak-to-strong refinement must address two related issues.
First, directly transferring arbitrary student predictions may propagate student errors to the teacher.
Second, teacher refinement should focus on inputs where student feedback is more informative.
CARD addresses these issues through inverse adversarial student feedback and teacher-student discrepancy maximization, respectively.

Let $S^{(1)}(x),T^{(1)}(x)\in\Delta^{C-1}$ denote the predictive distributions of the frozen student and the teacher being refined over $C$ classes, with $\Delta^{C-1}$ denoting the $(C-1)$-dimensional probability simplex.
For an input $x\in[0,1]^d$, the admissible $\ell_\infty$ perturbation set is defined as
\begin{equation}
\label{eq:perturbation-set}
\mathcal{B}_{\epsilon}(x)
=
\left\{
\delta:
\|\delta\|_{\infty}\leq\epsilon,\;
x+\delta\in[0,1]^d
\right\},
\end{equation}
where $d$ is the input dimension and $\epsilon$ is the perturbation budget.

\subsubsection{Inverse adversarial student feedback}

Directly using the student's prediction on a clean input may propagate erroneous or poorly aligned supervision to the teacher.
To obtain a more favorable student prediction with respect to the ground-truth label, we construct an inverse adversarial example by minimizing the student's classification loss within the admissible perturbation set:
\begin{equation}
\label{eq: inverse-adversarial-example}
\begin{aligned}
\delta_{\mathrm{inv}}^{\star}
&\in
\operatorname*{arg\,min}_{\delta\in\mathcal{B}_{\epsilon}(x)}
\operatorname{CE}
\left(
S^{(1)}(x+\delta),y
\right),\\
x_{\mathrm{inv}}
&=
x+\delta_{\mathrm{inv}}^{\star}.
\end{aligned}
\end{equation}

In contrast to standard adversarial-example generation, which typically seeks to increase the classification loss, Eq.~\eqref{eq: inverse-adversarial-example} searches for a nearby input that reduces the student's loss with respect to the ground-truth label.
Because $\delta=0$ is feasible, exact inner minimization guarantees
\[
\operatorname{CE}
\left(
S^{(1)}(x_{\mathrm{inv}}),y
\right)
\leq
\operatorname{CE}
\left(
S^{(1)}(x),y
\right).
\]

The resulting prediction $S^{(1)}(x_{\mathrm{inv}})$ is then used to supervise the teacher on the corresponding clean input.
This construction reduces the student's ground-truth classification loss before feedback is transferred, but it does not guarantee that every resulting prediction is correct.
The conditions under which such feedback preserves the ground-truth prediction after teacher alignment are analyzed in Sec.~\ref{subsec:feedback-analysis}.

\subsubsection{Teacher-student discrepancy maximization}

Complementing the inverse adversarial feedback mechanism, CARD further identifies perturbed inputs where the teacher and the frozen student exhibit large predictive discrepancies. Specifically, we search for perturbations that maximize the discrepancy between the student's clean prediction and the teacher's prediction on the perturbed input:

\begin{equation}
\label{eq:teacher-student-discrepancy-max}
\begin{aligned}
\delta_{\mathrm{adv}}^{\star}
&\in
\operatorname*{arg\,max}_{\delta\in\mathcal{B}_{\epsilon}(x)}
D_{\mathrm{KL}}
\left(
S^{(1)}(x)
\,\middle\|\,
T^{(1)}(x+\delta)
\right),\\
x_{\mathrm{adv}}
&=
x+\delta_{\mathrm{adv}}^{\star}.
\end{aligned}
\end{equation}

The resulting $x_{\mathrm{adv}}$ is a discrepancy-maximizing perturbed input relative to the frozen student's clean prediction.

\subsubsection{Teacher-refinement objective}

The two branches provide complementary supervision for teacher refinement on perturbed and clean inputs.
Their corresponding losses are defined as
\begin{equation}
\label{eq:teacher-refinement-components}
\begin{aligned}
\mathcal{L}_{\mathrm{rob}}
&=
D_{\mathrm{KL}}
\left(
S^{(1)}(x)
\,\middle\|\,
T^{(1)}(x_{\mathrm{adv}})
\right),\\
\mathcal{L}_{\mathrm{nat}}
&=
D_{\mathrm{KL}}
\left(
S^{(1)}(x_{\mathrm{inv}})
\,\middle\|\,
T^{(1)}(x)
\right).
\end{aligned}
\end{equation}

The robust branch aligns the teacher prediction at a discrepancy-maximizing perturbed input with the student's clean prediction, whereas the natural branch aligns the teacher prediction on the clean input with the student's inverse adversarial prediction.
The complete refinement objective is
\begin{equation}
\label{eq:teacher-refinement-objective}
\mathcal{L}_{\mathrm{refine}}
=
\mathcal{L}_{\mathrm{rob}}
+
\lambda
\mathcal{L}_{\mathrm{nat}},
\qquad
\lambda\geq0,
\end{equation}
where $\lambda$ controls the relative contribution of clean-input refinement.

Because $S^{(1)}$ remains frozen throughout this stage, its predictions serve as fixed supervision and only the teacher parameters are updated:
\begin{equation}
\label{eq:teacher-update}
\theta_{T^{(1)}}
\leftarrow
\theta_{T^{(1)}}
-
\eta_T
\nabla_{\theta_{T^{(1)}}}
\mathcal{L}_{\mathrm{refine}},
\end{equation}
where $\eta_T$ denotes the teacher learning rate.

Starting from $T^{(0)}$, this optimization produces the refined teacher $T^{(1)}$. 
The natural and robust branches respectively refine the teacher on clean inputs and discrepancy-selected perturbed inputs, while the frozen student provides a stable source of feedback.

\subsection{Strong-to-Weak Gain Transfer}
\label{subsec:gain-transfer}

Teacher refinement is not the final objective of CARD.
Its purpose is to provide an updated robust teacher that can further supervise the compact student.
After refinement, we therefore freeze $T^{(1)}$ and resume training from the first-round student $S^{(1)}$, rather than initializing a new student.
Continuing from $S^{(1)}$ preserves the knowledge acquired during the initial distillation stage and allows the refined teacher supervision to be incorporated into the existing student model.

CARD retains the student-training objective and adversarial-example generation procedure of the selected base adversarial-distillation method:
\begin{equation}
\label{eq:student-gain-transfer}
S^{(2)}
\leftarrow
\operatorname{AdvDistill}
\left(
S^{(1)},
T^{(1)},
\mathcal{D},
E_2;
\mathcal{L}_{\mathrm{base}}
\right).
\end{equation}

Compared with the initial strong-to-weak stage, the original teacher $T^{(0)}$ is replaced by the refined teacher $T^{(1)}$, and student optimization resumes from $S^{(1)}$.
The base student-distillation objective remains unchanged.
The resulting $S^{(2)}$ constitutes the final student of CARD, completing the teacher--student feedback loop by transferring the refined teacher supervision back to the same student.
The complete procedure is summarized in Algorithm~\ref{alg:card}.

\subsection{Reliability Analysis of Weak-to-Strong Feedback}
\label{subsec:feedback-analysis}
Having presented the complete CARD workflow, we now analyze the conditions under which weak-to-strong feedback preserves prediction correctness during teacher refinement.
Let \(q,p\in\Delta^{C-1}\) denote a frozen student-feedback distribution and the corresponding teacher output after alignment, respectively.
For the ground-truth class \(y\), we define the student-feedback margin as
\begin{equation}
m(q,y)
=
q_y-\max_{c\neq y}q_c.
\end{equation}
A positive margin \(m(q,y)>0\) indicates that the student feedback predicts the ground-truth class \(y\).

\begin{theorem}[Weak-to-strong prediction preservation]
\label{thm:feedback-transfer}
\leavevmode\\
If \(m(q,y)>0\) and
\begin{equation}
\label{eq:feedback-transfer}
D_{\mathrm{KL}}(q\|p)
<
\frac{m(q,y)^2}{2},
\end{equation}
then
\begin{equation}
p_y>\max_{c\neq y}p_c.
\end{equation}
\end{theorem}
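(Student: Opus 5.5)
The plan is to pass from the KL divergence to the total variation distance via Pinsker's inequality, and then show that a small total variation cannot flip the argmax when the margin of \(q\) is large enough. Pinsker's inequality gives
\[
\mathrm{TV}(q,p)=\tfrac12\|q-p\|_1\le\sqrt{\tfrac12 D_{\mathrm{KL}}(q\|p)}.
\]
Under the hypothesis \(D_{\mathrm{KL}}(q\|p)<m(q,y)^2/2\), this yields \(\|q-p\|_1<m(q,y)\), or equivalently \(\mathrm{TV}(q,p)<m(q,y)/2\). This step is just recalling a standard inequality, so it should be routine.

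Next, fix an arbitrary competing class \(c\neq y\) and bound the change in the pairwise gap \(p_y-p_c\) relative to \(q_y-q_c\). We have
\[
(q_y-q_c)-(p_y-p_c)=(q_y-p_y)-(q_c-p_c)\le |q_y-p_y|+|q_c-p_c|\le \|q-p\|_1 .
\]
Since \(q_y-q_c\ge m(q,y)\) for every \(c\neq y\), this gives
\[
p_y-p_c\ge m(q,y)-\|q-p\|_1>0 .
\]
Taking the maximum over the finitely many \(c\neq y\) then gives \(p_y>\max_{c\neq y}p_c\).

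The only point needing care is the constant, which is where I expect any obstacle to lie. With the crude bound \(|q_y-p_y|+|q_c-p_c|\le\|q-p\|_1\), the threshold \(m^2/2\) from Pinsker is exactly enough, so the argument closes. One could also sharpen it by noting that for two coordinates \(|q_y-p_y|+|q_c-p_c|\le 2\,\mathrm{TV}\) as well; this agrees with the bound above and does not improve it. So no refinement is needed. I would only double-check that the Pinsker normalization is \(\|q-p\|_1\le\sqrt{2D_{\mathrm{KL}}}\) with natural logarithms, since that is exactly what turns \(m^2/2\) into \(\|q-p\|_1<m\). Finally, I would note that \(D_{\mathrm{KL}}(q\|p)\) finite is implicit in the hypothesis, so the inequality is meaningful even if some entries of \(p\) vanish.
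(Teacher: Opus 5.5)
Your proof is correct and follows essentially the same route as the paper: Pinsker's inequality gives $D_{\mathrm{TV}}(p,q)<m(q,y)/2$, and the pairwise-gap bound $p_y-p_c\ge m(q,y)-2D_{\mathrm{TV}}(p,q)$ then holds for every $c\neq y$. The only cosmetic differences are that you apply Pinsker first and bound $|q_y-p_y|+|q_c-p_c|$ by $\|q-p\|_1$ rather than bounding each coordinate deviation by $D_{\mathrm{TV}}(p,q)$; both give the same constant.
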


\begin{proof}
Define the total variation distance as
\[
D_{\mathrm{TV}}(p,q)
=
\frac{1}{2}
\sum_{j=1}^{C}
|p_j-q_j|.
\]
For probability distributions $p$ and $q$, each coordinate satisfies
\[
|p_j-q_j|
\leq
D_{\mathrm{TV}}(p,q).
\]

\paragraph{Part (i): Preservation under a positive margin.}
\leavevmode\\
For any class $c\neq y$,
\[
\begin{aligned}
p_y-p_c
&=
(q_y-q_c)
+
(p_y-q_y)
-
(p_c-q_c)\\
&\geq
q_y-q_c
-
|p_y-q_y|
-
|p_c-q_c|\\
&\geq
m(q,y)
-
2D_{\mathrm{TV}}(p,q).
\end{aligned}
\]
Therefore, if
$D_{\mathrm{TV}}(p,q)<m(q,y)/2$,
then $p_y-p_c>0$ for every $c\neq y$.

\paragraph{Part (ii): KL-based control of output deviation.}
\leavevmode\\
Pinsker's inequality gives
\[
D_{\mathrm{TV}}(p,q)
=
D_{\mathrm{TV}}(q,p)
\leq
\sqrt{
\frac{1}{2}
D_{\mathrm{KL}}(q\|p)
}.
\]
Using Eq.~\eqref{eq:feedback-transfer},
\[
D_{\mathrm{TV}}(p,q)
<
\sqrt{
\frac{1}{2}
\cdot
\frac{m(q,y)^2}{2}
}
=
\frac{m(q,y)}{2}.
\]
Part (i) then gives
$p_y-p_c>0$
for every $c\neq y$.
Hence,
$p_y>\max_{c\neq y}p_c$.
\end{proof}

Theorem~\ref{thm:feedback-transfer} is independent of the relative capacities of the student and teacher.
It does not imply that the student is globally superior to the teacher.
Rather, it shows that on samples for which the student feedback predicts the ground-truth class with a positive margin, teacher alignment preserves that prediction whenever the remaining KL misalignment is sufficiently small.

For the natural branch, the theorem applies with
\[
q
=
S^{(1)}(x_{\mathrm{inv}}),
\qquad
p
=
T^{(1)}(x),
\]
where the inverse adversarial optimization reduces the student's classification loss before its prediction is used to supervise the teacher on the clean input.

For the robust branch,
\[
q
=
S^{(1)}(x),
\qquad
p
=
T^{(1)}(x_{\mathrm{adv}}).
\]
Here, discrepancy maximization identifies the perturbed input, while outer minimization aligns the teacher prediction at that input with the clean student reference. We next derive a distribution-level risk bound for weak-to-strong teacher alignment.

\begin{theorem}[Margin-dependent risk bound]
\label{thm:feedback-risk}
\textbf{\leavevmode\\}
Let
$Q,P\in\Delta^{C-1}$
and
$Y\in\{1,\ldots,C\}$
denote jointly distributed student feedback, teacher output after alignment, and ground-truth label, respectively.
Define
\begin{equation}
M_Q
=
Q_Y-\max_{c\neq Y}Q_c,
\end{equation}
and the student-feedback and teacher risks as
\begin{equation}
R_Q
=
\Pr(M_Q\leq0),
\qquad
R_P
=
\Pr
\left(
P_Y
\leq
\max_{c\neq Y}P_c
\right).
\end{equation}
Furthermore, define
\begin{equation}
\psi(Q,P,Y)
=
\begin{cases}
\displaystyle
\min
\left\{
1,\,
\frac{
2D_{\mathrm{KL}}(Q\|P)
}{
M_Q^2
}
\right\},
& M_Q>0,\\[8pt]
0,
& M_Q\leq0.
\end{cases}
\end{equation}
Then,
\begin{equation}
\label{eq:feedback-risk}
\begin{aligned}
R_P
&\leq
R_Q
+
\Pr
\left(
M_Q>0,\,
D_{\mathrm{KL}}(Q\|P)
\geq
\frac{M_Q^2}{2}
\right)\\
&\leq
R_Q
+
\mathbb{E}
\left[
\psi(Q,P,Y)
\right]\\
&\leq
1.
\end{aligned}
\end{equation}
\end{theorem}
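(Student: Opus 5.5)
The plan is to prove the three inequalities in the chain of Eq.~\eqref{eq:feedback-risk} one at a time. Theorem~\ref{thm:feedback-transfer} does the pointwise work, and the rest is event inclusion followed by a Markov-type bound.

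\textbf{First inequality.} Let $E_P=\{P_Y\le\max_{c\neq Y}P_c\}$ be the teacher-error event. Split it according to whether $M_Q\le0$ or $M_Q>0$, so that
\[
R_P\le\Pr(M_Q\le0)+\Pr(E_P,\ M_Q>0)=R_Q+\Pr(E_P,\ M_Q>0).
\]
On the event $\{M_Q>0\}$, apply Theorem~\ref{thm:feedback-transfer} pointwise with $q=Q$, $p=P$, $y=Y$. Its contrapositive says that if $M_Q>0$ and $E_P$ occurs, then $D_{\mathrm{KL}}(Q\|P)\ge M_Q^2/2$. Hence
\[
\{E_P,\ M_Q>0\}\subseteq\{M_Q>0,\ D_{\mathrm{KL}}(Q\|P)\ge M_Q^2/2\},
\]
and monotonicity of probability gives the first inequality. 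I will note that these events are measurable, since $M_Q$, the maximum and the KL divergence are all measurable functions of $(Q,P,Y)$. I will also allow $D_{\mathrm{KL}}=+\infty$; the inclusion still holds in that case.

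\textbf{Second inequality.} I will establish the pointwise bound
\[
\mathbf{1}\{M_Q>0,\ D_{\mathrm{KL}}(Q\|P)\ge M_Q^2/2\}\le\psi(Q,P,Y)
\]
and then take expectations.
\begin{itemize}
\item If $M_Q\le0$, both sides are $0$.
\item If $M_Q>0$ and the indicator equals $1$, then $2D_{\mathrm{KL}}/M_Q^2\ge1$, so $\psi=\min\{1,\cdot\}=1$.
\item If $M_Q>0$ and the indicator equals $0$, the bound holds because $\psi\ge0$.
\end{itemize}
This is the Markov-style step, and the truncation at $1$ is exactly what makes $\psi$ dominate the indicator.

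\textbf{Third inequality.} Since $\psi=0$ on $\{M_Q\le0\}$ and $\psi\le1$ everywhere,
\[
\mathbb{E}[\psi]=\mathbb{E}\bigl[\psi\,\mathbf{1}\{M_Q>0\}\bigr]\le\Pr(M_Q>0)=1-R_Q.
\]
Therefore $R_Q+\mathbb{E}[\psi]\le1$.

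There is no real obstacle here. The only care points are applying Theorem~\ref{thm:feedback-transfer} in contrapositive form, which relies on its strict inequality, and handling infinite KL values and measurability cleanly.
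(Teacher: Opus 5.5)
Your proposal is correct and follows the paper's proof essentially step for step. Both split the teacher-error event on $\{M_Q\le 0\}$ versus $\{M_Q>0\}$, use the contrapositive of Theorem~\ref{thm:feedback-transfer} for the inclusion, apply a Markov-type step (the paper writes the event as $\{\psi=1\}$ and applies Markov, which is your pointwise bound $\mathbf{1}\{M_Q>0,\ D_{\mathrm{KL}}(Q\|P)\ge M_Q^2/2\}\le\psi$), and use $\psi\le\mathbb{I}_{\{M_Q>0\}}$ for the last inequality. Your remarks on infinite KL values and measurability are minor extras that the paper leaves implicit.
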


\begin{proof}
Let
\[
\mathcal{E}_P
=
\left\{
P_Y
\leq
\max_{c\neq Y}P_c
\right\}
\]
denote the teacher error event.

\paragraph{Part (i): Error decomposition.}
We partition the teacher error event according to whether the student feedback has a positive margin:
\[
\begin{aligned}
R_P
&=
\Pr(\mathcal{E}_P)\\
&=
\Pr
\left(
\mathcal{E}_P
\cap
\{M_Q\leq0\}
\right)
+
\Pr
\left(
\mathcal{E}_P
\cap
\{M_Q>0\}
\right)\\
&\leq
R_Q
+
\Pr
\left(
\mathcal{E}_P
\cap
\{M_Q>0\}
\right).
\end{aligned}
\]

\paragraph{Part (ii): Teacher errors require margin-dependent KL misalignment.}
For a sample satisfying $M_Q>0$, if
\[
D_{\mathrm{KL}}(Q\|P)
<
\frac{M_Q^2}{2},
\]
Theorem~\ref{thm:feedback-transfer} shows that, when the student feedback predicts the ground-truth class with a positive margin and the KL divergence after alignment is sufficiently small, the teacher preserves the same ground-truth prediction.
Therefore,
\[
\mathcal{E}_P
\cap
\{M_Q>0\}
\subseteq
\left\{
M_Q>0,\,
D_{\mathrm{KL}}(Q\|P)
\geq
\frac{M_Q^2}{2}
\right\}.
\]
It follows that
\[
R_P
\leq
R_Q
+
\Pr
\left(
M_Q>0,\,
D_{\mathrm{KL}}(Q\|P)
\geq
\frac{M_Q^2}{2}
\right).
\]

\paragraph{Part (iii): Distribution-level control.}
Let
\[
\mathcal{A}
=
\left\{
M_Q>0,\,
D_{\mathrm{KL}}(Q\|P)
\geq
\frac{M_Q^2}{2}
\right\}.
\]
By the definition of $\psi(Q,P,Y)$,
\[
\mathcal{A}
=
\{\psi(Q,P,Y)=1\}.
\]
Because $\psi(Q,P,Y)$ is non-negative,
\[
\begin{aligned}
\Pr(\mathcal{A})
&=
\Pr
\left(
\psi(Q,P,Y)\geq1
\right)\\
&\leq
\mathbb{E}
\left[
\psi(Q,P,Y)
\right].
\end{aligned}
\]
Combining this result with Part (ii) gives
\[
R_P
\leq
R_Q
+
\mathbb{E}
\left[
\psi(Q,P,Y)
\right].
\]

Moreover,
\[
0
\leq
\psi(Q,P,Y)
\leq
\mathbb{I}_{\{M_Q>0\}},
\]
and therefore
\[
\mathbb{E}
\left[
\psi(Q,P,Y)
\right]
\leq
\Pr(M_Q>0)
=
1-R_Q.
\]
Hence,
\[
R_P
\leq
R_Q
+
\mathbb{E}
\left[
\psi(Q,P,Y)
\right]
\leq
1.
\]
This proves Eq.~\eqref{eq:feedback-risk}.
\end{proof}

For the natural branch,
Theorem~\ref{thm:feedback-risk} is instantiated with
\[
Q
=
S^{(1)}(X_{\mathrm{inv}}),
\qquad
P
=
T^{(1)}(X),
\]
whereas for the robust branch,
\[
Q
=
S^{(1)}(X),
\qquad
P
=
T^{(1)}(X_{\mathrm{adv}}).
\]

The bound shows that the teacher risk is controlled by two factors: errors already present in the student feedback and teacher-student misalignment relative to the feedback margin.
For correctly classified student feedback, a larger positive margin or a smaller KL divergence reduces the margin-normalized misalignment term in the bound.

Theorem~\ref{thm:feedback-risk} does not, by itself, guarantee that teacher refinement always reduces classification risk.
Rather, the bound certifies a strict improvement over the pre-refinement teacher whenever the resulting upper bound is lower than the pre-refinement teacher risk evaluated on the same input distribution.
The analysis is therefore distribution-specific.
For the robust branch, it characterizes the discrepancy-selected perturbed inputs considered during refinement, rather than worst-case robustness over the entire perturbation set.

\begin{table*}[t]
\caption{Clean and white-box robust accuracy (\%) on CIFAR-10 with WRN-28-10 as the teacher and different student architectures.
Within each student-architecture group, the best and second-best results in each column are highlighted in bold and underlined, respectively.
}
\label{tab:cifar10-wrn28-different-students}
\centering
\begin{tabular}{cclccccc}
\toprule
\multirow[c]{2}{*}{\textbf{Type}}
& \multirow[c]{2}{*}{\textbf{Architecture}}
& \multirow[c]{2}{*}{\textbf{Method}}
& \multicolumn{5}{c}{\textbf{CIFAR-10}} \\
\cmidrule(lr){4-8}
& & &
\textbf{Clean}
& \textbf{FGSM}
& \textbf{PGD-20}
& \textbf{CW-20}
& \textbf{AA} \\
\midrule

\textbf{Teacher}
& WRN-28-10
& ST~\cite{stat2023iclr}
& 84.57 & 62.17 & 57.67 & 54.99 & 53.77 \\

\midrule

\multirow[c]{23}{*}{\textbf{Student}}
& \multirow[c]{12}{*}{ResNet-18}
& PGD-AT~\cite{atpgd_ICLRmadry2018}
& 82.50 & 57.52 & 51.53 & 50.57 & 48.04 \\

& & TRADES~\cite{at_tradesicml2019}
& 82.17 & 58.20 & 52.21 & 50.48 & 49.18 \\

& & ST~\cite{stat2023iclr}
& 83.43 & 59.91 & 53.72 & 51.66 & 50.01 \\

& & ARD~\cite{AD_ard2020aaai}
& 83.43 & 59.27 & 53.04 & 51.20 & 49.49 \\

& & RSLAD~\cite{AD_rslad2021iccv}
& 82.44 & 60.99 & 55.66 & 53.19 & 51.90 \\

& & IAD~\cite{AD_iad2022iclr}
& 83.21 & 59.88 & 53.31 & 50.88 & 49.22 \\

& & AdaAD~\cite{AdaAD2023cpvr}
& 84.05 & 60.51 & 55.82 & 53.01 & 51.90 \\

& & PeerAiD~\cite{AD_peeraid2024cvpr}
& 82.75 & 59.68 & 53.78 & 51.64 & 49.56 \\

& & AdaAD+IGDM~\cite{AD_IGDM_2025ICLR}
& 82.81 & 61.04 & \underline{57.01}
& \underline{53.83} & \underline{52.75} \\

\cmidrule(lr){3-8}

& & \textbf{Ours-AdaAD (Once)}
& \underline{84.15} & 61.24 & 55.89 & 53.59 & 52.26 \\

& & \textbf{Ours-AdaAD (Twice)}
& \textbf{84.39} & \underline{61.81} & 55.62 & 53.68 & 52.22 \\

& & \textbf{Ours-AdaAD+IGDM (Once)}
& 82.92 & \textbf{61.95} & \textbf{57.03}
& \textbf{54.37} & \textbf{52.92} \\

\cmidrule(lr){2-8}

& \multirow[c]{11}{*}{MobileNet-V2}
& PGD-AT~\cite{atpgd_ICLRmadry2018}
& 79.99 & 54.94 & 49.23 & 47.70 & 45.11 \\

& & TRADES~\cite{at_tradesicml2019}
& 80.98 & 55.52 & 49.97 & 47.76 & 46.51 \\

& & ST~\cite{stat2023iclr}
& 80.05 & 56.75 & 51.97 & 48.14 & 47.12 \\

& & ARD~\cite{AD_ard2020aaai}
& 82.19 & 58.24 & 52.37 & 49.99 & 48.34 \\

& & RSLAD~\cite{AD_rslad2021iccv}
& 82.65 & \textbf{59.74} & 54.57
& \underline{51.95} & \underline{50.49} \\

& & IAD~\cite{AD_iad2022iclr}
& 81.46 & 58.06 & 52.78 & 50.16 & 48.43 \\

& & AdaAD~\cite{AdaAD2023cpvr}
& \underline{83.34} & 59.21 & 54.03 & 50.96 & 49.57 \\

& & PeerAiD~\cite{AD_peeraid2024cvpr}
& 79.98 & 57.48 & 53.89 & 50.85 & 49.04 \\

& & AdaAD+IGDM~\cite{AD_IGDM_2025ICLR}
& 81.39 & 58.99 & \underline{54.60} & 51.45 & 50.20 \\

\cmidrule(lr){3-8}

& & \textbf{Ours-AdaAD (Once)}
& \textbf{83.37} & \underline{59.59} & 53.89 & 51.85 & 50.36 \\

& & \textbf{Ours-AdaAD+IGDM (Once)}
& 79.71 & 59.04 & \textbf{54.70}
& \textbf{52.00} & \textbf{50.69} \\

\bottomrule
\end{tabular}
\end{table*}

\begin{table*}[t!]
\caption{
Clean and white-box robust accuracy (\%) on CIFAR-10 with
WRN-34-10 as the teacher and different student architectures.
Within each student-architecture group, the best and second-best results in each column are highlighted in bold and underlined, respectively.
}
\label{tab:cifar10-wrn34-different-students}
\centering
\begin{tabular}{cclccccc}
\toprule
\multirow[c]{2}{*}{\textbf{Type}}
& \multirow[c]{2}{*}{\textbf{Architecture}}
& \multirow[c]{2}{*}{\textbf{Method}}
& \multicolumn{5}{c}{\textbf{CIFAR-10}} \\
\cmidrule(lr){4-8}
& & &
\textbf{Clean}
& \textbf{FGSM}
& \textbf{PGD-20}
& \textbf{CW-20}
& \textbf{AA} \\
\midrule

\textbf{Teacher}
& WRN-34-10
& ST~\cite{stat2023iclr}
& 84.75 & 62.40 & 57.83 & 55.49 & 54.33 \\

\midrule

\multirow[c]{23}{*}{\textbf{Student}}
& \multirow[c]{12}{*}{ResNet-18}
& PGD-AT~\cite{atpgd_ICLRmadry2018}
& 82.50 & 57.52 & 51.53 & 50.57 & 48.04 \\

& & TRADES~\cite{at_tradesicml2019}
& 82.17 & 58.20 & 52.21 & 50.48 & 49.18 \\

& & ST~\cite{stat2023iclr}
& 83.43 & 59.91 & 53.72 & 51.66 & 50.01 \\

& & ARD~\cite{AD_ard2020aaai}
& 83.29 & 59.32 & 53.65 & 51.63 & 50.01 \\

& & RSLAD~\cite{AD_rslad2021iccv}
& 82.98 & 60.71 & 55.61 & 53.00 & 51.49 \\

& & IAD~\cite{AD_iad2022iclr}
& 82.84 & 59.69 & 54.13 & 51.72 & 50.10 \\

& & AdaAD~\cite{AdaAD2023cpvr}
& 84.28 & 60.97 & 56.17 & 53.36 & 52.02 \\

& & PeerAiD~\cite{AD_peeraid2024cvpr}
& 82.63 & 59.87 & 54.21 & 52.24 & 50.25 \\

& & AdaAD+IGDM~\cite{AD_IGDM_2025ICLR}
& 83.78 & \underline{61.54} & \underline{56.64} & \underline{54.09} & \underline{52.65} \\

\cmidrule(lr){3-8}

& & \textbf{Ours-AdaAD (Once)}
& \underline{84.45} & 61.07 & 55.68
& 53.63 & 52.22 \\

& & \textbf{Ours-AdaAD (Twice)}
& \textbf{84.49} & 61.49 & 55.39
& 53.95 & 52.34 \\

& & \textbf{Ours-AdaAD+IGDM (Once)}
& 84.06 & \textbf{61.89} & \textbf{56.73}
& \textbf{54.50} & \textbf{53.10} \\

\cmidrule(lr){2-8}

& \multirow[c]{11}{*}{MobileNet-V2}
& PGD-AT~\cite{atpgd_ICLRmadry2018}
& 79.99 & 54.94 & 49.23 & 47.70 & 45.11 \\

& & TRADES~\cite{at_tradesicml2019}
& 80.98 & 55.52 & 49.97 & 47.76 & 46.51 \\

& & ST~\cite{stat2023iclr}
& 80.05 & 56.75 & 51.97 & 48.14 & 47.12 \\

& & ARD~\cite{AD_ard2020aaai}
& 81.80 & 57.77 & 52.50 & 50.22 & 48.67 \\

& & RSLAD~\cite{AD_rslad2021iccv}
& 83.11 & \textbf{59.66} & 54.24
& \underline{51.68} & 50.28 \\

& & IAD~\cite{AD_iad2022iclr}
& 81.44 & 58.47 & 52.47 & 50.15 & 48.06 \\

& & AdaAD~\cite{AdaAD2023cpvr}
& \underline{83.86} & 59.32 & 54.06 & 51.41 & 49.95 \\

& & PeerAiD~\cite{AD_peeraid2024cvpr}
& 82.12 & 59.25 & 53.70 & 50.76 & 48.76 \\

& & AdaAD+IGDM~\cite{AD_IGDM_2025ICLR}
& 81.48 & 58.83 & \textbf{54.72} & 51.52
& \underline{50.34} \\

\cmidrule(lr){3-8}

& & \textbf{Ours-AdaAD (Once)}
& \textbf{83.90} & 59.55 & 53.71 & 51.57 & 50.02 \\

& & \textbf{Ours-AdaAD+IGDM (Once)}
& 81.73 & \underline{59.58} & \underline{54.54}
& \textbf{51.84} & \textbf{50.37} \\

\bottomrule
\end{tabular}

\end{table*}

\begin{table*}[t!]
\caption{Clean and white-box robust accuracy (\%) on CIFAR-100 with WRN-34-10 as the teacher and different student architectures.
Within each student-architecture group, the best and second-best results in each column are highlighted in bold and underlined, respectively.}
\label{tab:cifar100-wrn34-different-students}
\centering
\begin{tabular}{cclccccc}
\toprule
\multirow[c]{2}{*}{\textbf{Type}}
& \multirow[c]{2}{*}{\textbf{Architecture}}
& \multirow[c]{2}{*}{\textbf{Method}}
& \multicolumn{5}{c}{\textbf{CIFAR-100}} \\
\cmidrule(lr){4-8}
& & &
\textbf{Clean}
& \textbf{FGSM}
& \textbf{PGD-20}
& \textbf{CW-20}
& \textbf{AA} \\
\midrule

\textbf{Teacher}
& WRN-34-10
& ST~\cite{stat2023iclr}
& 59.40 & 35.92 & 33.71 & 30.30 & 29.19 \\

\midrule

\multirow[c]{23}{*}{\textbf{Student}}
& \multirow[c]{12}{*}{ResNet-18}
& PGD-AT~\cite{atpgd_ICLRmadry2018}
& 57.52 & 32.12 & 28.50 & 27.12 & 24.68 \\

& & TRADES~\cite{at_tradesicml2019}
& 56.32 & 29.83 & 26.95 & 23.77 & 22.69 \\

& & ST~\cite{stat2023iclr}
& 55.75 & 33.00 & 30.92 & 26.89 & 26.04 \\

& & ARD~\cite{AD_ard2020aaai}
& 58.44 & 34.92 & 31.54 & 28.47 & 26.42 \\

& & RSLAD~\cite{AD_rslad2021iccv}
& 56.55 & 35.08 & 32.85 & 29.34 & 28.00 \\

& & IAD~\cite{AD_iad2022iclr}
& 57.82 & 34.96 & 31.94 & 28.32 & 26.80 \\

& & AdaAD~\cite{AdaAD2023cpvr}
& 58.91 & 34.99 & 32.57 & 29.00 & 27.80 \\

& & PeerAiD~\cite{AD_peeraid2024cvpr}
& 56.76 & 35.68 & 32.99 & 29.66 & 27.97 \\

& & AdaAD+IGDM~\cite{AD_IGDM_2025ICLR}
& 58.49 & 35.30 & \underline{33.14} & 29.48 & 28.17 \\

\cmidrule(lr){3-8}

& & \textbf{Ours-AdaAD (Once)}
& \underline{59.66} & 35.75 & 33.03
& \underline{29.75} & \underline{28.19} \\

& & \textbf{Ours-AdaAD (Twice)}
& \textbf{59.68} & \underline{35.77} & 32.95
& 29.62 & \underline{28.19} \\

& & \textbf{Ours-AdaAD+IGDM (Once)}
& 58.62 & \textbf{36.00} & \textbf{33.63}
& \textbf{30.15} & \textbf{29.00} \\

\cmidrule(lr){2-8}

& \multirow[c]{11}{*}{MobileNet-V2}
& PGD-AT~\cite{atpgd_ICLRmadry2018}
& 56.00 & 30.99 & 27.73 & 25.84 & 23.25 \\

& & TRADES~\cite{at_tradesicml2019}
& 56.82 & 30.79 & 28.22 & 24.16 & 23.18 \\

& & ST~\cite{stat2023iclr}
& 54.53 & 31.91 & 30.10 & 25.41 & 24.64 \\

& & ARD~\cite{AD_ard2020aaai}
& 57.25 & 33.82 & 31.00 & 27.90 & 26.02 \\

& & RSLAD~\cite{AD_rslad2021iccv}
& 56.78 & 34.50 & 32.11 & 28.43 & 26.94 \\

& & IAD~\cite{AD_iad2022iclr}
& 57.41 & \underline{34.67} & 31.59 & 28.09 & 26.36 \\

& & AdaAD~\cite{AdaAD2023cpvr}
& \underline{57.95} & 33.41 & 30.92 & 27.00 & 25.55 \\

& & PeerAiD~\cite{AD_peeraid2024cvpr}
& 56.67 & 34.59 & 31.47 & \underline{28.82} & 26.92 \\

& & AdaAD+IGDM~\cite{AD_IGDM_2025ICLR}
& 57.12 & 34.34 & \underline{32.33} & 28.47 & \underline{27.44} \\

\cmidrule(lr){3-8}

& & \textbf{Ours-AdaAD (Once)}
& \textbf{58.24} & 33.98 & 31.30 & 28.11 & 26.40 \\

& & \textbf{Ours-AdaAD+IGDM (Once)}
& 57.14 & \textbf{34.82} & \textbf{32.59}
& \textbf{28.88} & \textbf{27.74} \\

\bottomrule
\end{tabular}
\end{table*}

\begin{table*}[t!]
\caption{
Clean and white-box robust accuracy (\%) on TinyImageNet-10 with WRN-34-10 as the teacher and ResNet-18 as the student. 
The best and second-best student results in each column are highlighted in bold and underlined, respectively.
}
\label{tab:tinyimagenet10-wrn34-rn18}
\centering
\begin{tabular}{cclccccc}
\toprule
\multirow[c]{2}{*}{\textbf{Type}}
& \multirow[c]{2}{*}{\textbf{Architecture}}
& \multirow[c]{2}{*}{\textbf{Method}}
& \multicolumn{5}{c}{\textbf{TinyImageNet-10}} \\
\cmidrule(lr){4-8}
& & &
\textbf{Clean}
& \textbf{FGSM}
& \textbf{PGD-20}
& \textbf{CW-20}
& \textbf{AA} \\
\midrule

\textbf{Teacher}
& WRN-34-10
& ST~\cite{stat2023iclr}
& 77.80 & 62.00 & 60.80 & 60.00 & 60.00 \\

\midrule

\multirow[c]{12}{*}{\textbf{Student}}
& \multirow[c]{12}{*}{ResNet-18}
& PGD-AT~\cite{atpgd_ICLRmadry2018}
& 79.60 & 61.00 & 57.20 & 57.20 & 55.20 \\

& & TRADES~\cite{at_tradesicml2019}
& 77.80 & 58.80 & 56.80 & 55.60 & 55.40 \\

& & ST~\cite{stat2023iclr}
& 76.60 & 62.20 & 61.00 & 59.00 & 58.20 \\

& & ARD~\cite{AD_ard2020aaai}
& 79.60 & 60.40 & 58.60 & 57.00 & 56.40 \\

& & RSLAD~\cite{AD_rslad2021iccv}
& 76.80 & 62.60 & 61.80 & 59.20 & 59.20 \\

& & IAD~\cite{AD_iad2022iclr}
& 79.20 & 61.40 & 59.40 & 57.60 & 57.20 \\

& & AdaAD~\cite{AdaAD2023cpvr}
& 78.40 & \underline{62.80} & \underline{62.40}
& 59.60 & 59.40 \\

& & PeerAiD~\cite{AD_peeraid2024cvpr}
& \underline{82.60} & 62.40 & 58.80 & 58.20 & 57.20 \\

& & AdaAD+IGDM~\cite{AD_IGDM_2025ICLR}
& 77.40 & \underline{62.80} & 62.20
& \underline{60.20} & \underline{59.80} \\

\cmidrule(lr){3-8}

& & \textbf{Ours-AdaAD (Once)}
& 79.20 & 61.60 & 61.00 & 60.00 & 59.40 \\

& & \textbf{Ours-AdaAD+IGDM (Once)}
& 78.80 & \textbf{63.80} & \textbf{62.60}
& \textbf{61.00} & \textbf{60.80} \\

& & \textbf{Ours-ARD (Once)}
& \textbf{82.80} & 60.80 & 57.60 & 57.00 & 55.40 \\

\bottomrule
\end{tabular}

\end{table*}

\begin{table*}[t!]
\caption{Transfer-based black-box accuracy (\%) on CIFAR-100 with WRN-34-10 as the teacher and ResNet-18 as the target student.
PGD-100 and CW-100 adversarial examples are generated on independently trained robust ResNet-18, VGG-19, and MobileNet-V2 surrogate models.
The best and second-best results in each column are shown in bold and underlined, respectively.}
\label{tab:blackbox-cifar100-wrn34-rn18}
\centering
\begin{tabular}{lcccccc}
\toprule
\multirow[c]{2}{*}{\textbf{Method}}
& \multicolumn{2}{c}{\textbf{ResNet-18}}
& \multicolumn{2}{c}{\textbf{VGG-19}}
& \multicolumn{2}{c}{\textbf{MobileNet-V2}} \\
\cmidrule(lr){2-3}
\cmidrule(lr){4-5}
\cmidrule(lr){6-7}
& \textbf{PGD-100} & \textbf{CW-100}
& \textbf{PGD-100} & \textbf{CW-100}
& \textbf{PGD-100} & \textbf{CW-100} \\
\midrule

PGD-AT~\cite{atpgd_ICLRmadry2018}
& 33.69 & 33.41
& 41.64 & 42.03
& 38.36 & 38.27 \\

TRADES~\cite{at_tradesicml2019}
& 35.71 & 36.49
& 40.75 & 41.68
& 37.55 & 37.91 \\

ST~\cite{stat2023iclr}
& 37.67 & 38.17
& 42.00 & 42.68
& 39.13 & 39.65 \\

ARD~\cite{AD_ard2020aaai}
& 39.32 & 39.31
& 44.19 & 44.24
& 40.73 & 40.68 \\

RSLAD~\cite{AD_rslad2021iccv}
& 39.18 & 39.72
& 43.23 & 43.86
& 40.58 & 40.81 \\

IAD~\cite{AD_iad2022iclr}
& 39.57 & 39.80
& 44.39 & 44.77
& 41.36 & 41.61 \\

AdaAD~\cite{AdaAD2023cpvr}
& 39.64 & 39.76
& 44.76 & 44.85
& 41.50 & 41.11 \\

PeerAiD~\cite{AD_peeraid2024cvpr}
& 39.53 & 39.97
& 43.70 & 43.39
& 41.42 & 41.00 \\

AdaAD+IGDM~\cite{AD_IGDM_2025ICLR}
& 39.76 & 40.05
& 44.86 & 45.31
& 41.51 & 41.40 \\

\midrule

\textbf{Ours-AdaAD (Once)}
& 40.14 & 40.23
& \underline{45.32} & \underline{45.53}
& 41.94 & 41.78 \\

\textbf{Ours-AdaAD (Twice)}
& \underline{40.16} & \textbf{40.44}
& \textbf{45.50} & \textbf{45.87}
& \textbf{42.16} & \textbf{42.02} \\

\textbf{Ours-AdaAD+IGDM (Once)}
& \textbf{40.20} & \underline{40.28}
& 45.14 & 45.48
& \underline{41.98} & \underline{41.82} \\

\bottomrule
\end{tabular}
\end{table*}

\begin{table*}[t]
\caption{
Transfer-based black-box accuracy (\%) on CIFAR-100 with WRN-34-10 as the teacher and ResNet-18 as the target student. 
PGD-100 and CW-100 adversarial examples are generated on independently trained natural ResNet-18, MobileNet-V2, and VGG-19 surrogate models. The best and second-best results in each column are shown in bold and underlined, respectively.
}
\label{tab:blackbox-clean-surrogate-cifar100-part1}
\centering
\begin{tabular}{lcccccc}
\toprule
\multirow[c]{2}{*}{\textbf{Method}}
& \multicolumn{2}{c}{\textbf{ResNet-18}}
& \multicolumn{2}{c}{\textbf{MobileNet-V2}}
& \multicolumn{2}{c}{\textbf{VGG-19}}
 \\
\cmidrule(lr){2-3}
\cmidrule(lr){4-5}
\cmidrule(lr){6-7}

& \textbf{PGD-100} & \textbf{CW-100}
& \textbf{PGD-100} & \textbf{CW-100}
& \textbf{PGD-100} & \textbf{CW-100}
 \\
\midrule

PGD-AT~\cite{atpgd_ICLRmadry2018}
& 56.29 & 56.31
& 56.30 & 56.32
& 56.49 & 56.53
\\

TRADES~\cite{at_tradesicml2019}
& 54.83 & 54.96
& 54.80 & 54.97
& 55.12 & 55.28
\\

ST~\cite{stat2023iclr}
& 54.43 & 54.54
& 54.35 & 54.31
& 54.50 & 54.68
\\

ARD~\cite{AD_ard2020aaai}
& 56.99 & 57.20
& 57.12 & 57.04
& 57.38 & 57.15
\\

RSLAD~\cite{AD_rslad2021iccv}
& 55.16 & 55.24
& 55.14 & 55.16
& 55.46 & 55.50
\\

IAD~\cite{AD_iad2022iclr}
& 56.81 & 56.98
& 56.82 & 56.91
& 57.04 & 57.09
\\

AdaAD~\cite{AdaAD2023cpvr}
& 57.36 & 57.50
& 57.36 & 57.47
& 57.53 & 57.53
\\

PeerAiD~\cite{AD_peeraid2024cvpr}
& 55.57 & 55.54
& 55.71 & 55.73
& 55.72 & 55.64
\\

AdaAD+IGDM~\cite{AD_IGDM_2025ICLR}
& 57.12 & 57.07
& 56.98 & 56.98
& 57.11 & 57.20
\\

\midrule

\textbf{Ours-AdaAD (Once)}
& \underline{57.98} & \underline{58.11}
& \underline{57.96} & \underline{58.10}
& \underline{58.12} & \underline{58.23}
\\

\textbf{Ours-AdaAD (Twice)}
& \textbf{57.99} & \textbf{58.18}
& \textbf{58.01} & \textbf{58.15}
& \textbf{58.37} & \textbf{58.44}
\\

\bottomrule
\end{tabular}

\end{table*}

\begin{table*}[t]
\caption{
Transfer-based black-box accuracy (\%) on CIFAR-100 with WRN-34-10 as the teacher and ResNet-18 as the target student. 
PGD-100 and CW-100 adversarial examples are generated on independently trained natural ResNet-50, DenseNet-169, and WRN-34-10 surrogate models. 
The best and second-best results in each column are shown in bold and underlined, respectively.
}
\label{tab:blackbox-clean-surrogate-cifar100-part2}
\centering
\begin{tabular}{lcccccc}
\toprule
\multirow[c]{2}{*}{\textbf{Method}}
& \multicolumn{2}{c}{\textbf{ResNet-50}}
& \multicolumn{2}{c}{\textbf{DenseNet-169}}
& \multicolumn{2}{c}{\textbf{WRN-34-10}} \\
\cmidrule(lr){2-3}
\cmidrule(lr){4-5}
\cmidrule(lr){6-7}
& \textbf{PGD-100} & \textbf{CW-100}
& \textbf{PGD-100} & \textbf{CW-100}
& \textbf{PGD-100} & \textbf{CW-100} \\
\midrule

PGD-AT~\cite{atpgd_ICLRmadry2018}
& 55.99 & 55.92
& 56.26 & 56.26
& 57.03 & 57.07 \\

TRADES~\cite{at_tradesicml2019}
& 54.75 & 54.50
& 54.68 & 54.89
& 55.54 & 55.64 \\

ST~\cite{stat2023iclr}
& 54.13 & 54.32
& 54.48 & 54.42
& 54.94 & 54.99 \\

ARD~\cite{AD_ard2020aaai}
& 56.85 & 56.93
& 57.18 & 56.96
& 57.53 & 57.60 \\

RSLAD~\cite{AD_rslad2021iccv}
& 54.74 & 54.97
& 55.04 & 55.25
& 55.71 & 55.75 \\

IAD~\cite{AD_iad2022iclr}
& 56.70 & 56.81
& 56.75 & 56.93
& 57.32 & 57.30 \\

AdaAD~\cite{AdaAD2023cpvr}
& 57.26 & 57.21
& 57.27 & 57.47
& 58.17 & 58.03 \\

PeerAiD~\cite{AD_peeraid2024cvpr}
& 55.31 & 55.36
& 55.60 & 55.57
& 56.16 & 56.24 \\

AdaAD+IGDM~\cite{AD_IGDM_2025ICLR}
& 56.68 & 56.84
& 57.00 & 57.15
& 57.53 & 57.54 \\

\midrule

\textbf{Ours-AdaAD (Once)}
& \underline{57.96} & \textbf{58.03}
& \underline{58.04} & \textbf{58.07}
& \underline{58.81} & \underline{58.75} \\

\textbf{Ours-AdaAD (Twice)}
& \textbf{58.05} & \underline{57.96}
& \textbf{58.30} & \underline{58.04}
& \textbf{58.82} & \textbf{58.82} \\

\bottomrule
\end{tabular}

\end{table*}

\section{Experiments}
We evaluate CARD through white-box and transfer-based black-box robustness experiments, followed by ablation studies, hyperparameter analysis, and qualitative visualization.
The experiments are designed to examine the effectiveness and generalizability of CARD across different datasets, student architectures, and base adversarial-distillation methods.

\subsection{Experimental Setup}

This subsection describes the datasets, network architectures, compared methods, training configurations, and evaluation protocols used throughout the experiments.
Unless otherwise stated, a baseline method and its corresponding CARD variant use the same dataset split, model architecture, and base adversarial-distillation configuration.

\subsubsection{Datasets and preprocessing.}

We evaluate CARD on CIFAR-10, CIFAR-100~\citep{cifar100_2009}, and a fixed ten-class subset of TinyImageNet~\citep{tinyimagenet}, denoted as TinyImageNet-10.
CIFAR-10 and CIFAR-100 each contain 50,000 training images and 10,000 test images at a spatial resolution of \(32\times32\) pixels, with 10 and 100 classes, respectively.
We use the original training and test partitions without introducing additional training data.

TinyImageNet-10 consists of ten fixed classes selected from TinyImageNet.
For each class, all 500 training images and 50 validation images are retained, resulting in 5,000 training images and 500 evaluation images at a spatial resolution of \(64\times64\) pixels.
The original TinyImageNet training and validation splits are used for training and evaluation, respectively.
The selected classes are Goldfish, King penguin, American alligator, Bullet train, School bus, Broom, Bucket, Hourglass, Snorkel, and Espresso.
The same TinyImageNet-10 subset is used in all experiments.

For training, we apply random cropping with a padding of four pixels and random horizontal flipping, following commonly adopted preprocessing protocols in adversarial training~\citep{stat2023iclr} and adversarial distillation~\citep{AD_IGDM_2025ICLR,AdaAD2023cpvr}.
No data augmentation is applied during evaluation.

\subsubsection{Architectures and baselines.}

We use WideResNet-28-10 (WRN-28-10) and WideResNet-34-10 (WRN-34-10)~\citep{wrnarxiv2016} as pretrained robust teachers.
ResNet-18~\citep{resnet18cvpr2016} and MobileNet-V2 ~\citep{mnv2cvpr2018} are used as student architectures.

We compare CARD with representative adversarial training methods, including
PGD-AT~\citep{atpgd_ICLRmadry2018},
TRADES~\citep{at_tradesicml2019},
and ST~\citep{stat2023iclr},
as well as adversarial robustness distillation methods, including
ARD~\citep{AD_ard2020aaai},
RSLAD~\citep{AD_rslad2021iccv},
IAD~\citep{AD_iad2022iclr},
AdaAD~\citep{AdaAD2023cpvr},
PeerAiD~\citep{AD_peeraid2024cvpr},
and AdaAD+IGDM~\citep{AD_IGDM_2025ICLR}.
The compared methods are reproduced following their published training protocols and hyperparameter settings whenever applicable.

For each CARD variant, the initial strong-to-weak stage follows the corresponding base adversarial-distillation method.
CARD directly starts teacher refinement from the student checkpoint produced by this initial distillation stage, without introducing additional student training before teacher refinement.

In the main experiments, CARD is instantiated on AdaAD and AdaAD+IGDM.
\textit{Once} denotes one complete teacher-refinement and gain-transfer cycle.
For variants marked \textit{Twice}, the same refinement and gain-transfer procedure is applied for a second cycle, initialized from the teacher and student obtained after the preceding cycle.

\subsubsection{Training and implementation details.}

Teacher refinement is initialized from the pretrained robust teacher \(T^{(0)}\), while the student obtained from the initial strong-to-weak distillation stage, \(S^{(1)}\), is kept frozen throughout refinement.
Thus, only the teacher parameters are updated during this stage.

The teacher is optimized using stochastic gradient descent (SGD) with a momentum of \(0.9\), a weight decay of \(2\times10^{-4}\), and a batch size of 128.
Teacher refinement is performed for 20 epochs.
The learning rate is initialized to \(0.02\), reduced to \(0.01\) after the 10th epoch, and further reduced to \(0.001\) after the 18th epoch.
The temperature used in the KL-divergence terms is fixed to
\(\tau=1.0\).
The selection of \(\lambda\) is further analyzed in Section~\ref{sec:lambda-sensitivity}.

For both inverse adversarial feedback and teacher--student discrepancy maximization, perturbations are constrained within an \(\ell_{\infty}\) ball with radius \(\epsilon=8/255\).
Each inner optimization is performed for 10 steps with a step size of \(2/255\).
After each step, the perturbed input is projected back onto the corresponding \(\ell_{\infty}\) ball centered at the clean input and clipped to the valid input range.

After teacher refinement, the refined teacher \(T^{(1)}\) is frozen, and student training resumes from the first-round checkpoint \(S^{(1)}\).
The student is trained for an additional 30 epochs under the supervision of \(T^{(1)}\).
The student-distillation objective and adversarial-example generation procedure follow those of the corresponding base adversarial-distillation method, with method-specific hyperparameters retained unless otherwise stated.

\subsubsection{Evaluation protocol.}

For white-box evaluation, we consider FGSM~\citep{fgsmiclr2015},
PGD-20~\citep{atpgd_ICLRmadry2018},
a 20-step attack based on the CW margin loss~\citep{att_cwsp2017_qianyi},
and AutoAttack~\citep{att_aaicml20}.
All attacks are constrained by an \(\ell_{\infty}\) perturbation budget of \(\epsilon=8/255\).
For PGD-20 and the CW-20 attack, the step size is \(2/255\).
Each white-box attack is generated directly against the evaluated target model, and the same attack configuration is used for a baseline and its corresponding CARD variant. 
AutoAttack is evaluated in its standard configuration, comprising APGD-CE, APGD-DLR, FAB, and Square Attack, under the same \(\ell_{\infty}\) perturbation budget.

For transfer-based black-box evaluation, adversarial examples are first generated against independently trained surrogate models and then transferred directly to the target student without access to the target model parameters or gradients during attack generation.
We consider PGD-100 and CW-100, both with \(\epsilon=8/255\) and a step size of \(2/255\).

The robust surrogate models include ResNet-18, VGG-19~\citep{vgg_iclr2015}, and MobileNet-V2, whereas the naturally trained surrogate models include ResNet-18, MobileNet-V2, VGG-19, ResNet-50, DenseNet-169~\citep{densenet_cvpr2017}, and WRN-34-10.
For each surrogate model and attack configuration, the same transferred adversarial examples are used to evaluate all target methods, ensuring that the baseline methods and their corresponding CARD variants are compared under identical transfer attacks.

\subsection{White-Box Robustness}

We first evaluate CARD under white-box attacks across different datasets, teacher architectures, student architectures, and base adversarial-distillation objectives.

\subsubsection{Results on CIFAR-10.}

Table~\ref{tab:cifar10-wrn28-different-students} reports the CIFAR-10 results with WRN-28-10 as the teacher.
For ResNet-18, applying CARD once to AdaAD+IGDM improves clean, FGSM, PGD-20, CW-20, and AA accuracy over the original AdaAD+IGDM baseline by \(0.11\), \(0.91\), \(0.02\), \(0.54\), and \(0.17\) percentage points, respectively.
Similarly, applying CARD once to AdaAD improves all five metrics over the original AdaAD baseline.
A second CARD cycle further increases clean, FGSM, and CW-20 accuracy relative to the one-cycle AdaAD-based variant, while slightly reducing PGD-20 and AA accuracy.

For MobileNet-V2, the effect of CARD is more mixed.
Compared with the original AdaAD+IGDM baseline, applying CARD once increases FGSM, PGD-20, CW-20, and AA accuracy by \(0.05\), \(0.10\), \(0.55\), and \(0.49\) percentage points, respectively, while reducing clean accuracy by \(1.68\) percentage points.
Compared with the original AdaAD baseline, applying CARD once increases clean, FGSM, CW-20, and AA accuracy by \(0.03\), \(0.38\), \(0.89\), and \(0.79\) percentage points, respectively, whereas PGD-20 accuracy decreases by \(0.14\) percentage points.
These results indicate that the effect of CARD on CIFAR-10 varies with both the student architecture and the base adversarial-distillation objective.

Table~\ref{tab:cifar10-wrn34-different-students} further reports the CIFAR-10 results with WRN-34-10 as the teacher.
For ResNet-18, compared with the original AdaAD baseline, applying CARD once increases clean, FGSM, CW-20, and AA accuracy by \(0.17\), \(0.10\), \(0.27\), and \(0.20\) percentage points, respectively, while PGD-20 accuracy decreases by \(0.49\) percentage points.
Applying a second CARD cycle further changes the results relative to the one-cycle variant, increasing clean, FGSM, CW-20, and AA accuracy by \(0.04\), \(0.42\), \(0.32\), and \(0.12\) percentage points, respectively, while PGD-20 accuracy decreases by an additional \(0.29\) percentage points.
Compared with the original AdaAD+IGDM baseline, applying CARD once improves clean, FGSM, PGD-20, CW-20, and AA accuracy by \(0.28\), \(0.35\), \(0.09\), \(0.41\), and \(0.45\) percentage points, respectively.
This AdaAD+IGDM-based CARD variant achieves the highest FGSM, PGD-20, CW-20, and AA accuracy among the evaluated ResNet-18 student methods.

For MobileNet-V2, the effect of CARD is again metric dependent.
Compared with the original AdaAD baseline, applying CARD once increases clean, FGSM, CW-20, and AA accuracy by \(0.04\), \(0.23\), \(0.16\), and \(0.07\) percentage points, respectively, while PGD-20 accuracy decreases by \(0.35\) percentage points.
Compared with the original AdaAD+IGDM baseline, applying CARD once increases clean, FGSM, CW-20, and AA accuracy by \(0.25\), \(0.75\), \(0.32\), and \(0.03\) percentage points, respectively, while PGD-20 accuracy decreases by \(0.18\) percentage points.
Among the evaluated MobileNet-V2 student methods, the AdaAD+IGDM-based CARD variant achieves the highest CW-20 and AA accuracy.

\subsubsection{Results on CIFAR-100.}

Table~\ref{tab:cifar100-wrn34-different-students} reports the results on CIFAR-100 with WRN-34-10 as the teacher. 
For ResNet-18, compared with the original AdaAD+IGDM baseline, applying CARD once improves clean, FGSM, PGD-20, CW-20, and AA accuracy by \(0.13\), \(0.70\), \(0.49\), \(0.67\), and \(0.83\) percentage points, respectively.
For MobileNet-V2, compared with the corresponding AdaAD+IGDM baseline, the improvements are \(0.02\), \(0.48\), \(0.26\), \(0.41\), and \(0.30\) percentage points, respectively. 
CARD also consistently improves AdaAD after one cycle for both student architectures.
For ResNet-18, compared with the original AdaAD baseline, applying CARD once increases clean, FGSM, PGD-20, CW-20, and AA accuracy by \(0.75\), \(0.76\), \(0.46\), \(0.75\), and \(0.39\) percentage points, respectively.
For MobileNet-V2, the corresponding improvements over the original AdaAD baseline are \(0.29\), \(0.57\), \(0.38\), \(1.11\), and \(0.85\) percentage points.
These results show that one CARD cycle consistently improves both clean and robust accuracy over the corresponding AdaAD and AdaAD+IGDM baselines across the two evaluated student architectures.

For ResNet-18 with AdaAD, applying a second CARD cycle yields only marginal changes relative to the one-cycle variant.
Specifically, clean and FGSM accuracy increase by \(0.02\) percentage points each, PGD-20 and CW-20 accuracy decrease by \(0.08\) and \(0.13\) percentage points, respectively, and AA accuracy remains unchanged.
This result indicates that additional refinement-transfer cycles do not necessarily provide uniform improvements across all evaluation metrics.

\subsubsection{Results on TinyImageNet-10.}

Table~\ref{tab:tinyimagenet10-wrn34-rn18} reports the results on TinyImageNet-10 with WRN-34-10 as the teacher and ResNet-18 as the student.
Compared with the original AdaAD+IGDM baseline, applying CARD once improves clean, FGSM, PGD-20, CW-20, and AA accuracy by \(1.40\), \(1.00\), \(0.40\), \(0.80\), and \(1.00\) percentage points, respectively.
The resulting AdaAD+IGDM-based CARD variant achieves the highest robust accuracy under all four attacks among the evaluated student methods, while also improving clean accuracy over its corresponding baseline.

The effects of CARD are less uniform when combined with the other distillation objectives.
Compared with the original AdaAD baseline, applying CARD once increases clean and CW-20 accuracy by \(0.80\) and \(0.40\) percentage points, respectively, and leaves AA accuracy unchanged, while FGSM and PGD-20 accuracy decrease by \(1.20\) and \(1.40\) percentage points.
Compared with the original ARD baseline, applying CARD once increases clean and FGSM accuracy by \(3.20\) and \(0.40\) percentage points, respectively, leaves CW-20 accuracy unchanged, and decreases PGD-20 and AA accuracy by \(1.00\) percentage point each.
The ARD-based CARD variant achieves the highest clean accuracy among the evaluated student methods.

Overall, the effect of CARD on TinyImageNet-10 varies with the base adversarial-distillation objective.
The AdaAD +IGDM-based variant improves both clean accuracy and robustness under all four evaluated attacks, whereas the AdaAD- and ARD-based variants exhibit metric-dependent trade-offs between clean and robust performance.

\subsection{Black-Box Transfer Robustness}

We further evaluate the transfer-based black-box robustness of CARD on CIFAR-100 using independently trained robust and naturally trained surrogate models.

\subsubsection{Results with Robust Surrogates.}

Table~\ref{tab:blackbox-cifar100-wrn34-rn18} reports the transfer-based black-box results with ResNet-18 as the target student and ResNet-18, VGG-19, and MobileNet-V2 as robust surrogate models.
Compared with the original AdaAD baseline, applying CARD twice improves transferred-attack accuracy by \(0.52\)--\(1.02\) percentage points across all six surrogate-attack combinations.

Compared with the original AdaAD+IGDM baseline, applying CARD once also improves transferred-attack accuracy across all six surrogate-attack combinations, with gains ranging from \(0.17\) to \(0.47\) percentage points.
These results indicate that CARD consistently improves transfer-based black-box robustness against attacks generated from multiple robust surrogate architectures.

\subsubsection{Results with Natural Surrogates.}

Tables~\ref{tab:blackbox-clean-surrogate-cifar100-part1} and~\ref{tab:blackbox-clean-surrogate-cifar100-part2} report the transfer-based black-box results with ResNet-18 as the target student and ResNet-18, MobileNet-V2, VGG-19, ResNet-50, DenseNet-169, and WRN-34-10 as naturally trained surrogate models.
PGD-100 and CW-100 adversarial examples generated on each surrogate are directly transferred to the target student.

Compared with the original AdaAD baseline, applying CARD once improves transferred-attack accuracy by \(0.59\)--\(0.82\) percentage points across all 12 surrogate-attack combinations.
Applying CARD twice also improves all 12 combinations over the original AdaAD baseline, with gains ranging from \(0.57\) to \(1.03\) percentage points.

Overall, these results demonstrate consistent improvements in transfer-based black-box robustness across attacks generated from multiple robust and naturally trained surrogate architectures.

\subsection{Ablation Studies}
\label{subsec:ablation-studies}

We conduct ablation studies to examine three aspects of CARD:
(1) whether weak-to-strong teacher refinement contributes beyond additional student training,
(2) whether teacher refinement and student gain transfer should be optimized jointly or sequentially, and
(3) how different supervision signals affect teacher refinement.
Unless otherwise specified, the student is initialized from the checkpoint obtained after the initial strong-to-weak adversarial distillation stage, whereas the teacher is initialized from the pretrained robust teacher.
All teacher-refinement variants use an \(\ell_{\infty}\) perturbation budget of \(\epsilon=8/255\), with 10 inner optimization steps and a step size of \(2/255\).

\subsubsection{Contribution of the Cyclic Components}

We first examine whether the improvement of CARD can be attributed simply to additional student training.
The baseline corresponds to the original AdaAD student.
The \emph{S2W-only} variant removes weak-to-strong (W2S) teacher refinement and directly continues strong-to-weak (S2W) student training under the guidance of the unchanged teacher using the original AdaAD objective.
The complete CARD variant performs W2S teacher refinement before the subsequent S2W gain-transfer stage.

As shown in Table~\ref{tab:ablation_w2s_component}, compared with the original AdaAD baseline, S2W-only training improves CW-20 and AA accuracy by \(0.55\) and \(0.60\) percentage points, respectively, but reduces clean accuracy by \(0.86\) percentage points.
Compared with S2W-only training, adding W2S teacher refinement further improves clean, CW-20, and AA accuracy by \(1.15\), \(0.56\), and \(0.25\) percentage points, respectively.
Consequently, the complete CARD framework outperforms the original AdaAD baseline by \(0.29\) percentage points in clean accuracy and by \(1.11\) and \(0.85\) percentage points under CW-20 and AA, respectively.
These results indicate that the improvement of CARD cannot be explained solely by additional student training and that W2S teacher refinement provides an additional contribution.

\begin{table}[t]
\caption{Component ablation of CARD on CIFAR-100 with WRN-34-10 as the teacher and MobileNet-V2 as the student.
W2S and S2W denote weak-to-strong teacher refinement and strong-to-weak gain transfer, respectively.}
\label{tab:ablation_w2s_component}
\setlength{\tabcolsep}{1.9mm}
\centering
\begin{tabular}{lccccc}
\toprule
\textbf{Variant}
& \textbf{W2S}
& \textbf{S2W}
& \textbf{Clean}
& \textbf{CW-20}
& \textbf{AA} \\
\midrule
AdaAD
&
&
& 57.95
& 27.00
& 25.55 \\

S2W Only
&
& $\checkmark$
& 57.09
& 27.55
& 26.15 \\

\textbf{CARD}
& $\checkmark$
& $\checkmark$
& \textbf{58.24}
& \textbf{28.11}
& \textbf{26.40} \\
\bottomrule
\end{tabular}
\end{table}

\subsubsection{Separated versus Joint Optimization}

We next examine whether W2S teacher refinement and S2W student gain transfer should be optimized jointly or sequentially.
CARD adopts separated optimization, in which teacher refinement is completed before the refined teacher guides the subsequent student gain-transfer stage. 

For comparison, we construct a \emph{Joint} variant that updates the teacher and student simultaneously within a single training stage.
The teacher is optimized using the CARD refinement objective, whereas the student is optimized using the original AdaAD objective:
\begin{equation}
\label{eq:abl-joint-training}
\min_{\theta_T}\mathcal{L}_{\mathrm{refine}},
\qquad
\min_{\theta_S}\mathcal{L}_{\mathrm{AdaAD}},
\end{equation}
where \(\theta_T\) and \(\theta_S\) denote the teacher and student parameters, respectively.
During joint optimization, the current student provides feedback for teacher refinement while the current teacher simultaneously guides student optimization.

Both models are trained for 30 epochs.
The teacher learning rate is set to \(0.02\), \(0.01\), \(0.001\), and \(0.0001\) during epochs 0--7, 8--15, 16--24, and 25--29, respectively, while the student learning rate is set to one-tenth of the corresponding teacher learning rate.

As shown in Table~\ref{tab:ablation_separated_training}, compared with joint optimization, separated optimization improves clean accuracy by \(1.31\) percentage points and improves PGD-20, CW-20, and AA accuracy by \(0.07\), \(0.11\), and \(0.10\) percentage points, respectively.
Although the robustness improvements are modest, they are consistent across all three attacks, while the improvement in clean accuracy is more pronounced.
These results support the separated optimization strategy adopted in CARD.

\begin{table}[t]
\caption{Comparison of joint and separated optimization on CIFAR-100 with WRN-34-10 as the teacher and MobileNet-V2 as the student.
Both variants contain the same W2S teacher-refinement and S2W gain-transfer components.}
\label{tab:ablation_separated_training}
\setlength{\tabcolsep}{1.7mm}
\centering
\begin{tabular}{lcccc}
\toprule
\textbf{Method}
& \textbf{Clean}
& \textbf{PGD-20}
& \textbf{CW-20}
& \textbf{AA} \\
\midrule

AdaAD
& 57.95
& 30.92
& 27.00
& 25.55 \\

\midrule

Joint
& 56.93
& 31.23
& 28.00
& 26.30 \\

\textbf{Separated (Ours)}
& \textbf{58.24}
& \textbf{31.30}
& \textbf{28.11}
& \textbf{26.40} \\

\bottomrule
\end{tabular}
\end{table}

\subsubsection{Teacher-Refinement Signals}

We further investigate how different supervision signals affect teacher refinement.
During refinement, the student is frozen and only the teacher is updated.
After each refinement variant, the resulting teacher is frozen and the same student undergoes the subsequent S2W gain-transfer stage using the AdaAD objective.
Therefore, the variants differ only in the supervision used for teacher refinement. 
The \emph{w/o Refinement} variant directly performs additional student training using the unchanged teacher.
\emph{GT Only} removes model-prediction supervision and relies only on ground-truth labels.
\emph{Teacher Clean + GT} uses the teacher's clean prediction for robust refinement and the ground-truth label for clean-input refinement, thereby removing student feedback.
\emph{Student Clean} uses the clean student prediction for both refinement branches and therefore removes inverse adversarial feedback.
\emph{Student Inverse} instead uses the inverse-adversarial student prediction for both branches.
\emph{Teacher Inverse} replaces student feedback with the corresponding inverse-adversarial teacher prediction.
The complete CARD objective combines the clean student prediction for robust refinement with the inverse-adversarial student prediction for natural refinement.

For the prediction-based variants, the perturbed teacher input is constructed by maximizing the discrepancy between the corresponding reference prediction and the teacher prediction on the perturbed input.
For \emph{GT Only}, the perturbed input is instead generated by maximizing the teacher's cross-entropy loss.
The inverse variants construct the corresponding inverse adversarial input by minimizing the ground-truth cross-entropy loss of the student or teacher within the same perturbation set.
These definitions isolate the effects of student feedback, inverse adversarial feedback, and the choice of supervision signal while keeping the subsequent student gain-transfer stage unchanged.

As shown in Table~\ref{tab:ablation_refinement_signals_cifar100}, none of the alternative refinement-signal variants improves all reported metrics over the original AdaAD baseline.
Although \emph{GT Only} achieves the highest clean accuracy of \(61.85\%\), its PGD-20 and AA accuracy decrease by \(6.27\) and \(4.16\) percentage points, respectively, relative to AdaAD.
In contrast, the complete CARD objective achieves the highest robust accuracy under PGD-20, CW-20, and AA.
Compared with the original AdaAD baseline, CARD improves clean, PGD-20, CW-20, and AA accuracy by \(0.75\), \(0.46\), \(0.75\), and \(0.39\) percentage points, respectively.
Compared with the \emph{w/o Refinement} variant, the corresponding improvements are \(1.18\), \(0.24\), \(0.43\), and \(0.27\) percentage points.
These results support the use of clean student feedback for robust refinement together with inverse-adversarial student feedback for natural refinement.

\begin{table}[t]
\caption{Comparison of teacher-refinement supervision signals on CIFAR-100 with WRN-34-10 as the teacher and ResNet-18 as the student.
The reported results correspond to the student obtained after gain transfer from the resulting teacher. 
GT denotes ground-truth hard-label supervision.}
\label{tab:ablation_refinement_signals_cifar100}
\setlength{\tabcolsep}{1.6mm}
\centering
\begin{tabular}{lcccc}
\toprule
\textbf{Refinement Signal}
& \textbf{Clean}
& \textbf{PGD-20}
& \textbf{CW-20}
& \textbf{AA} \\
\midrule

AdaAD
& 58.91
& 32.57
& 29.00
& 27.80 \\

\midrule

w/o Refinement
& 58.48
& 32.79
& 29.32
& 27.92 \\

GT Only
& \textbf{61.85}
& 26.30
& 26.09
& 23.64 \\

Teacher Clean + GT
& 61.75
& 29.67
& 28.14
& 26.49 \\

Student Clean
& 58.76
& 32.81
& 29.22
& 27.96 \\

Student Inverse
& 60.98
& 32.51
& 29.29
& 27.73 \\

Teacher Inverse
& 61.49
& 29.86
& 28.48
& 26.74 \\

\textbf{Ours-AdaAD (Once)}
& 59.66
& \textbf{33.03}
& \textbf{29.75}
& \textbf{28.19} \\

\bottomrule
\end{tabular}
\end{table}

\subsection{Sensitivity to the Refinement Weight \(\lambda\)}
\label{sec:lambda-sensitivity}

The refinement weight \(\lambda\) controls the contribution of the natural-refinement term relative to the robust-refinement term and is therefore the main method-specific hyperparameter in CARD.
We analyze its sensitivity on CIFAR-10 using WRN-28-10 as the teacher and ResNet-18 as the student.
All other training and refinement settings are kept unchanged.
We vary \(\lambda\) separately in the first- and second-cycle refinement stages and evaluate both the refined teacher and the student obtained after the subsequent gain-transfer stage.

\begin{figure*}[t!]
    \centering
    \begin{tabular}{@{}cc@{}}
        \includegraphics[width=0.4\textwidth]{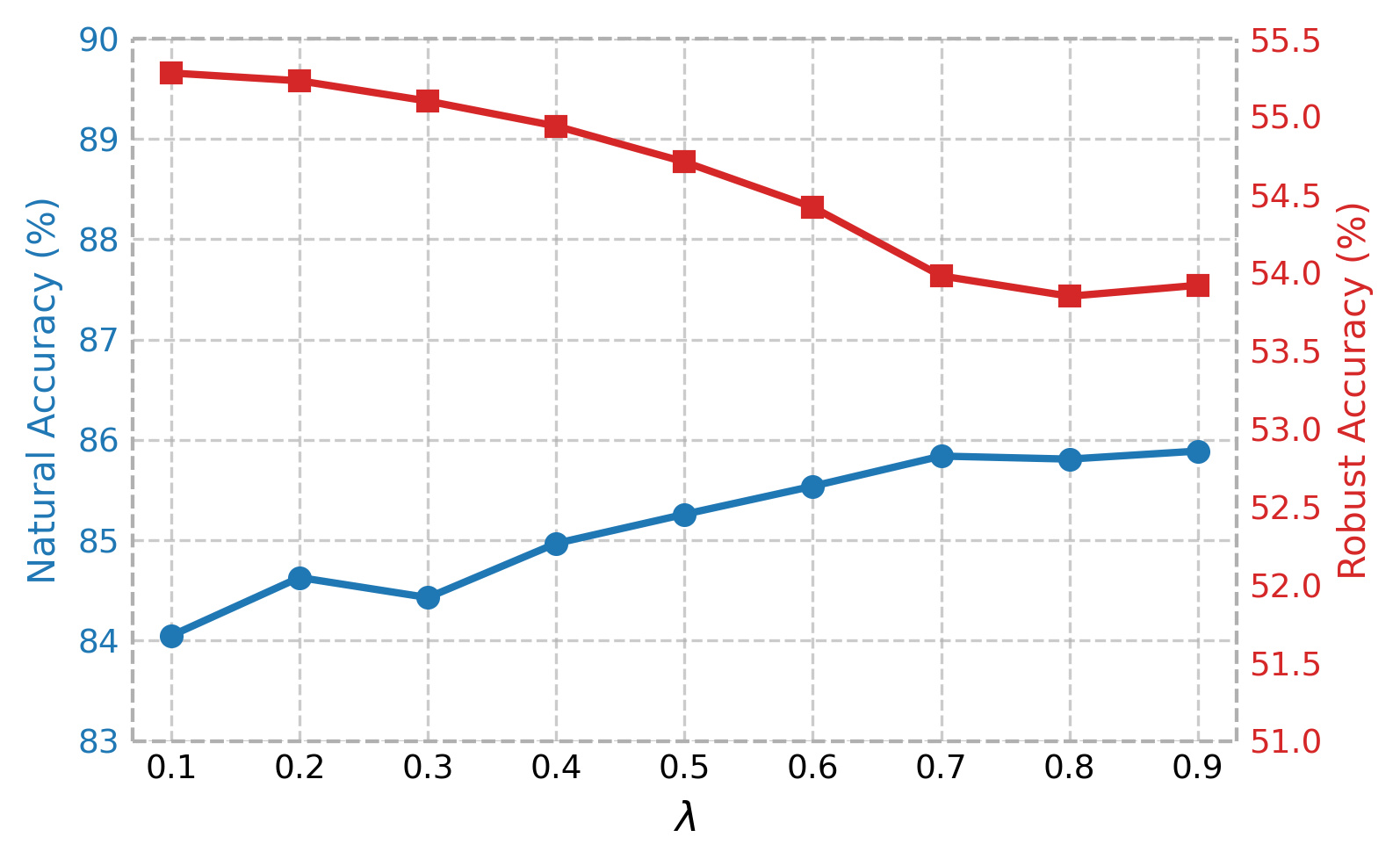} &
        \includegraphics[width=0.4\textwidth]{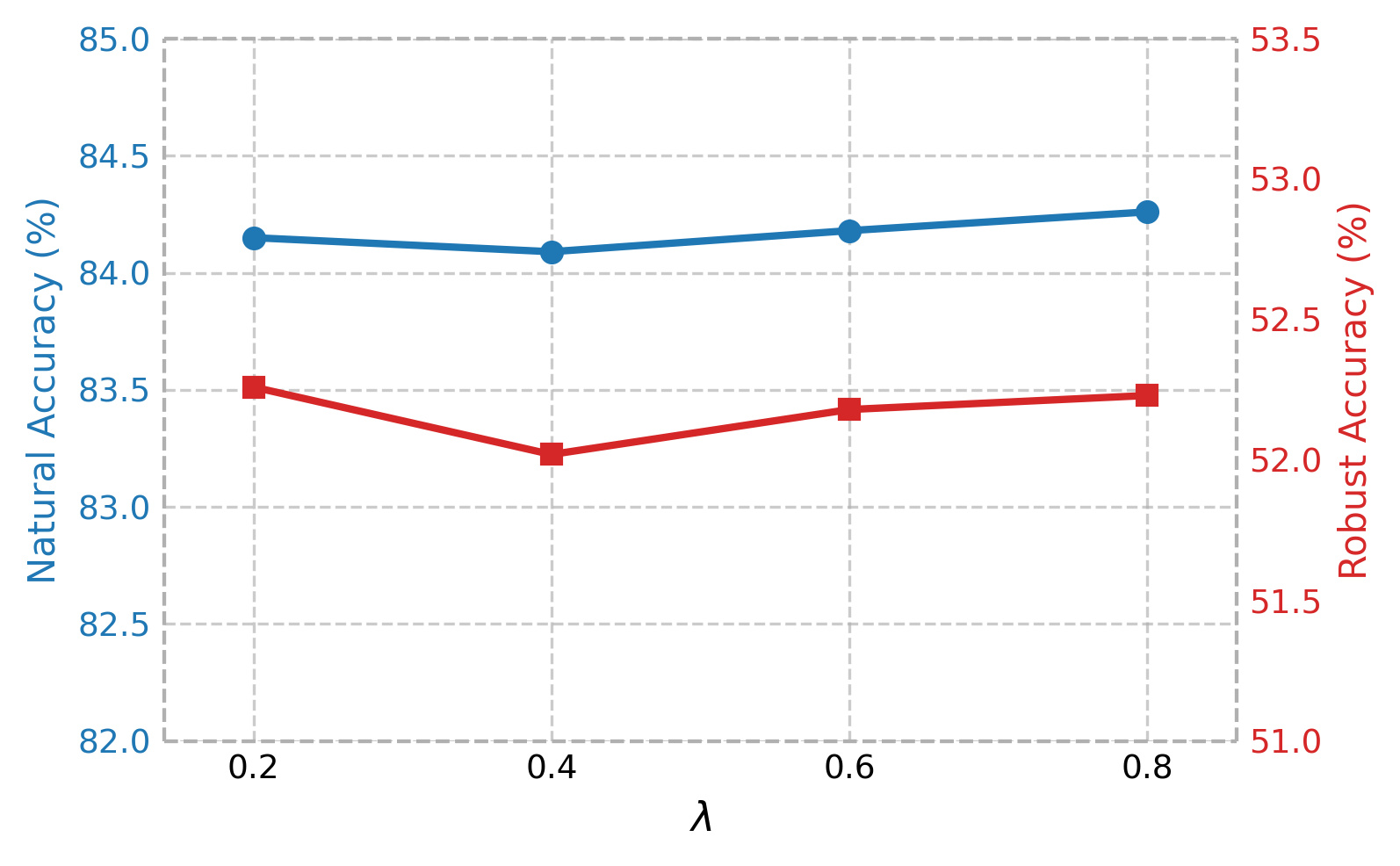} \\
        (a) First-Cycle Teacher &
        (b) First-Cycle Student \\[2pt]
        \includegraphics[width=0.4\textwidth]{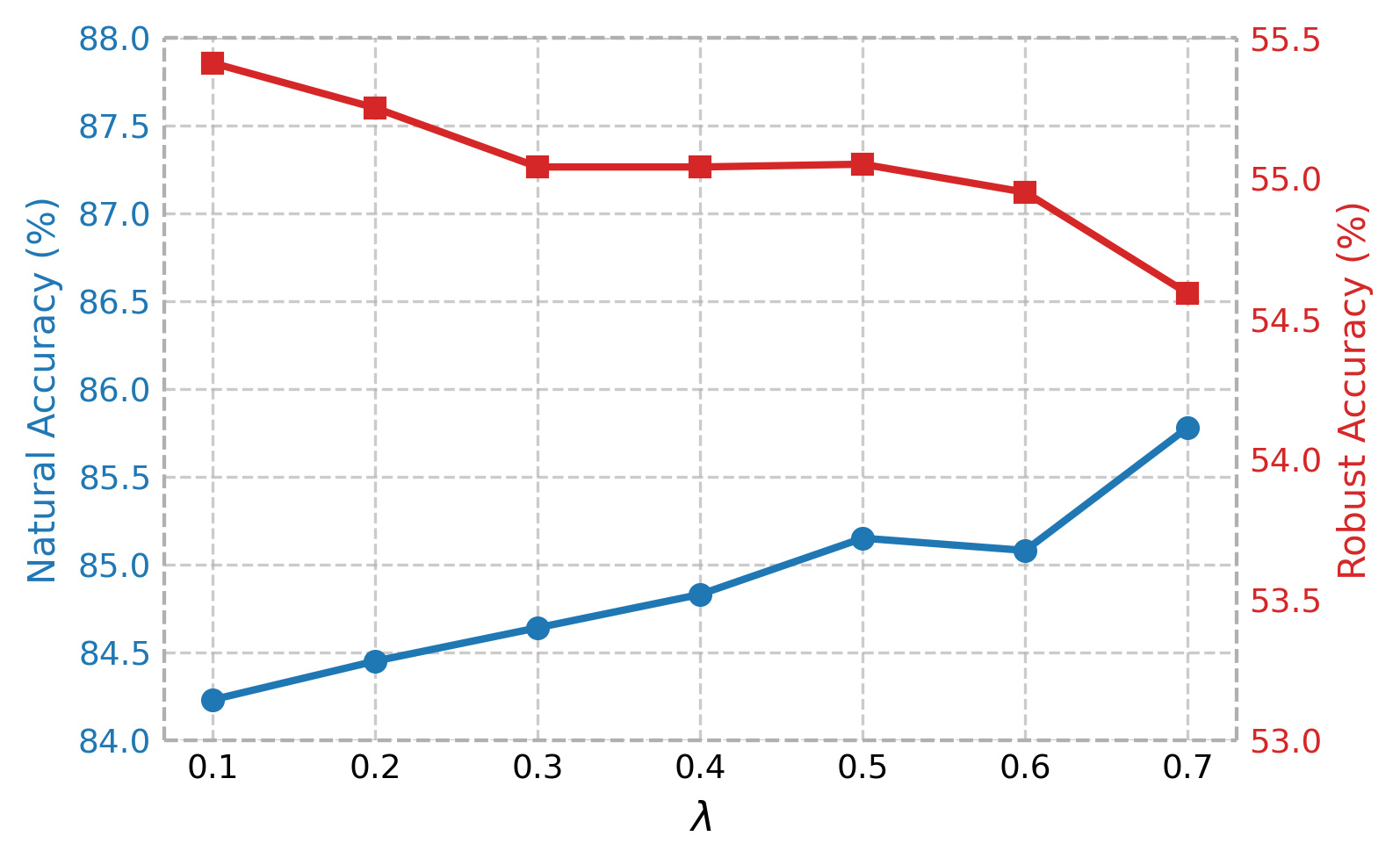} &
        \includegraphics[width=0.4\textwidth]{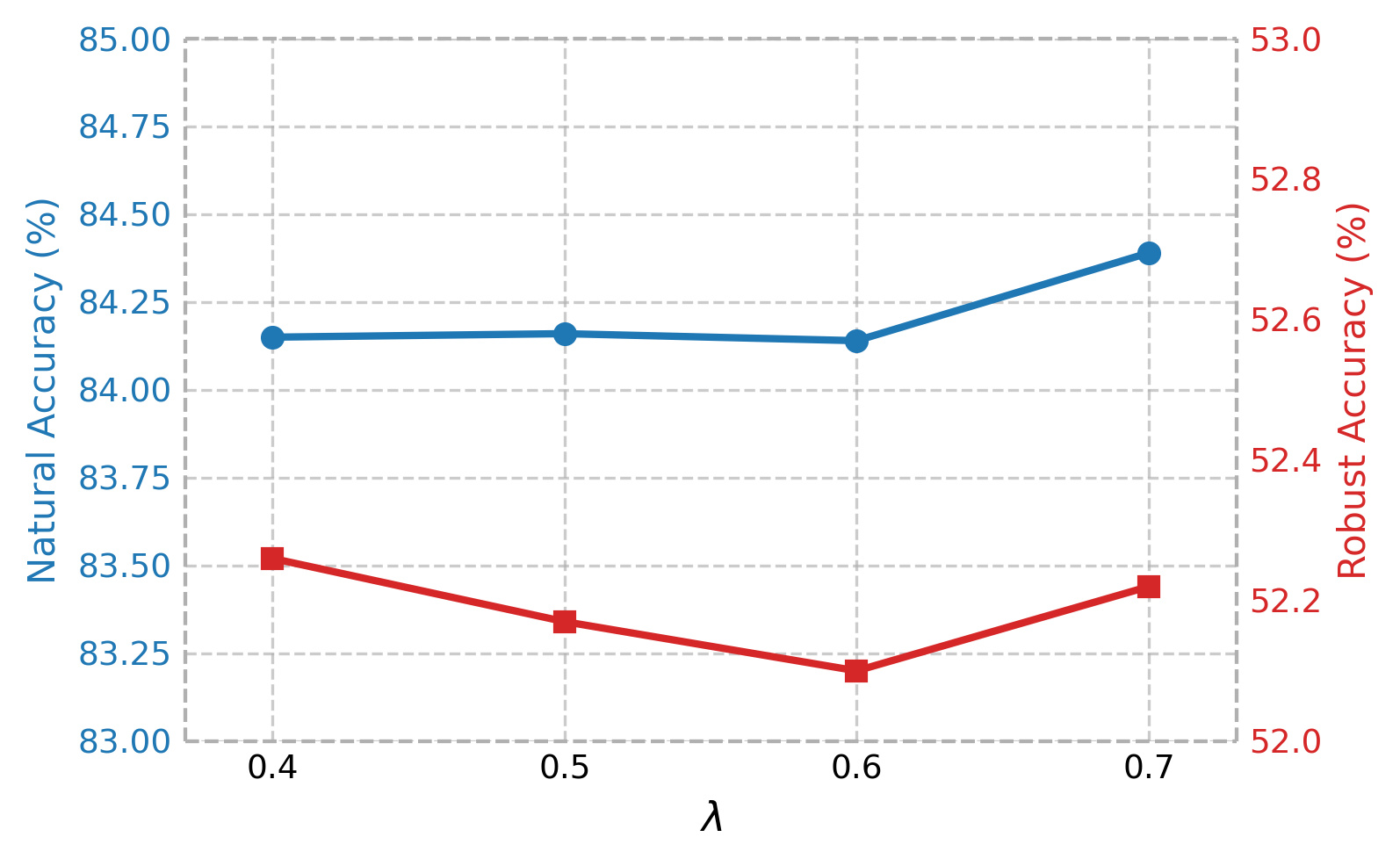} \\
        (c) Second-Cycle Teacher &
        (d) Second-Cycle Student
    \end{tabular}
    \caption{
    Sensitivity of CARD to the refinement weight \(\lambda\) on CIFAR-10 with WRN-28-10 as the teacher and ResNet-18 as the student.
    Blue and red curves denote natural and robust accuracy, respectively.
    Panels (a) and (c) report the refined teacher after the first and second refinement cycles, while panels (b) and (d) report the corresponding students after the subsequent gain-transfer stage.
    }
    \label{fig:lambda_search}
\end{figure*}

As shown in Fig.~\ref{fig:lambda_search}(a), increasing \(\lambda\) generally increases the natural accuracy of the first-cycle refined teacher while reducing its robust accuracy.
A similar tendency is observed for the second-cycle teacher in
Fig.~\ref{fig:lambda_search}(c).
This behavior is consistent with the role of \(\lambda\): a larger value places greater emphasis on natural refinement, whereas a smaller value gives relatively greater importance to the robust-refinement term.

In contrast, the student performance in Figs.~\ref{fig:lambda_search}(b) and~\ref{fig:lambda_search}(d) varies less across the evaluated values of \(\lambda\).
Although the refined teacher exhibits a clearer natural--robust trade-off, the student obtained after the subsequent gain-transfer stage is comparatively less sensitive to the precise refinement weight.
This observation suggests that the effect of \(\lambda\) on the intermediate teacher does not translate directly into an equally large variation in the final student performance.

Based on validation performance, we set \(\lambda=0.2\) for the first-cycle refinement and \(\lambda=0.7\) for the second-cycle refinement.
Within the evaluated range, the final student performance remains relatively stable, indicating limited sensitivity to \(\lambda\) in the considered setting.

\subsection{Qualitative and Mechanistic Analysis}
\label{sec:qualitative-analysis}

We provide complementary qualitative analyses to examine the attention patterns learned by CARD and the prediction behavior underlying weak-to-strong teacher refinement.
Specifically, Grad-CAM visualizations are used to compare discriminative attention under adversarial perturbations, while confusion matrices are used to examine the class consistency of inverse-adversarial feedback and the prediction changes induced by teacher refinement and subsequent gain transfer.

\subsubsection{Grad-CAM Visualizations}
\label{subsec:gradcam}

We use Grad-CAM~\cite{gradcam_iccv2017} to visualize the discriminative regions emphasized by different models on adversarial CIFAR-100 inputs.
As shown in Fig.~\ref{fig:gradcam-adversarial}, several baseline students exhibit relatively fragmented activation patterns or place substantial attention on background regions.
In contrast, the CARD student shows more spatially concentrated activation on object-relevant regions in the visualized examples and exhibits greater qualitative similarity to the robust teacher.

Compared with the baselines, the CARD student attends more consistently to object-relevant regions and shows activation patterns closer to those of the robust teacher under adversarial perturbations.
This qualitative similarity suggests that the student better retains the discriminative patterns of the robust teacher, consistent with the improved robustness observed in the quantitative results.

\begin{figure*}[t!]
    \centering
    \includegraphics[width=0.9\linewidth]{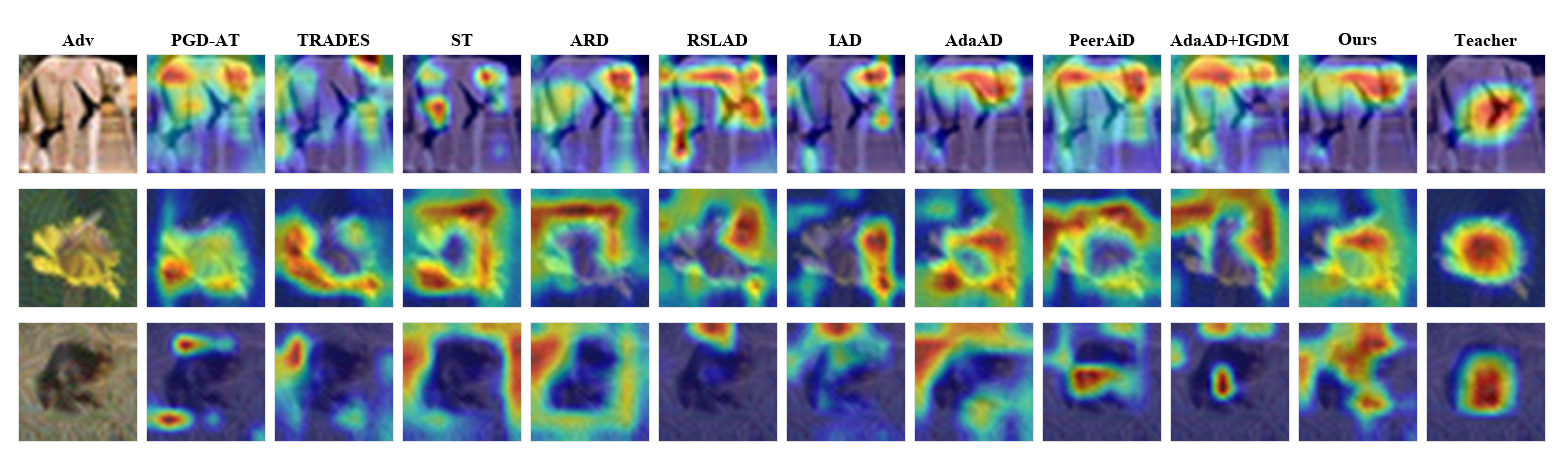}
    \caption{
    Grad-CAM visualizations on adversarial CIFAR-100 inputs.
    The leftmost column shows the corresponding adversarial images, the middle columns show activation maps of different student methods, and the rightmost column shows those of the robust teacher.
    In the visualized examples, the CARD student exhibits more concentrated activation on object-relevant regions and greater qualitative similarity to the robust teacher than the baseline students.
    }
    \label{fig:gradcam-adversarial}
\end{figure*}

\begin{figure*}[t]
    \centering
    \includegraphics[width=0.9\linewidth]{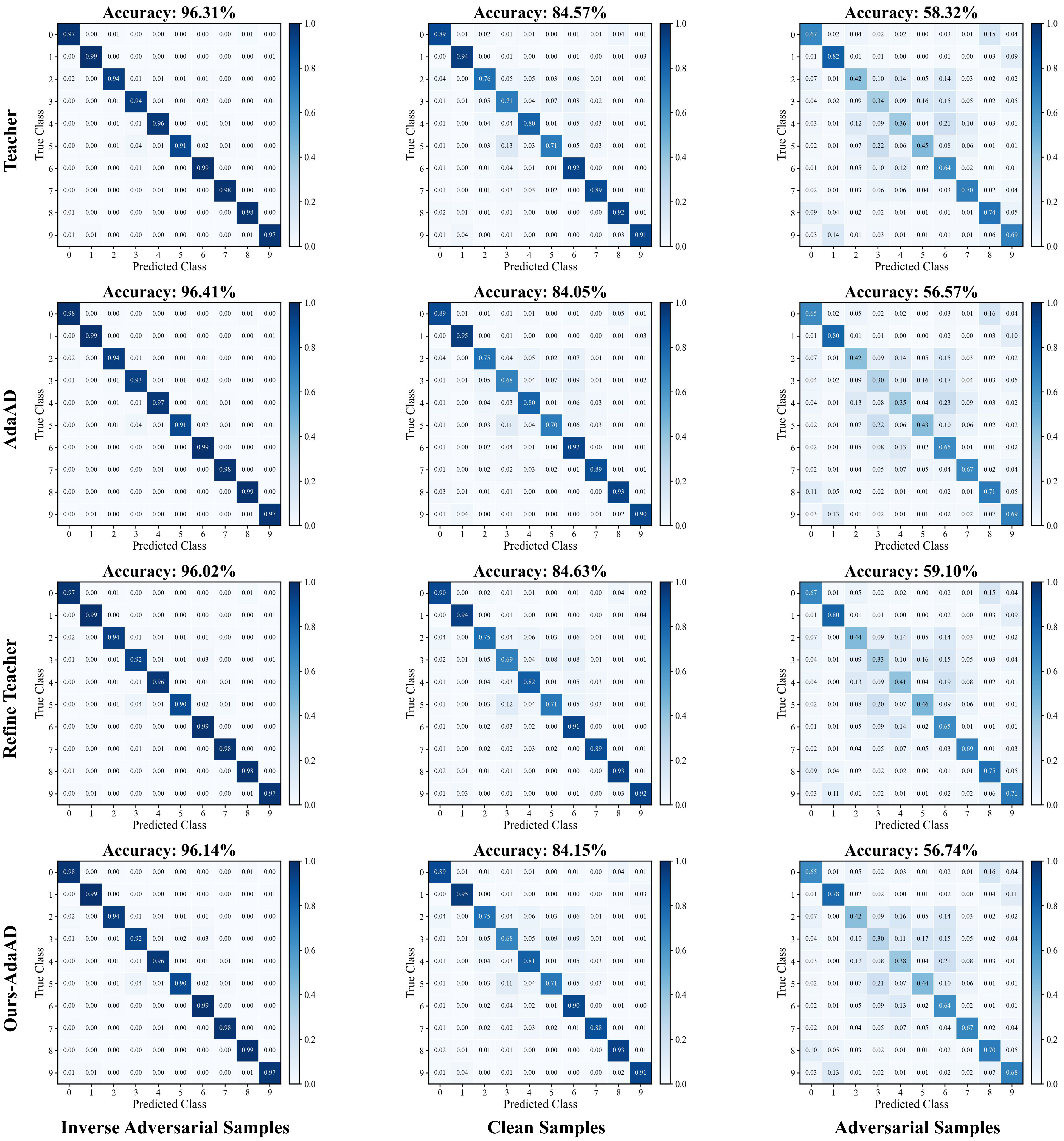}
\caption{
Normalized confusion matrices on inverse adversarial, clean, and adversarial CIFAR-10 inputs with WRN-28-10 as the teacher and ResNet-18 as the student. 
The inverse adversarial and adversarial inputs are generated using 10-step iterative optimization. 
Columns correspond to the three input types, whereas rows show the initial teacher, AdaAD student, refined teacher, and final CARD student. 
The strong diagonal concentration on inverse adversarial inputs indicates that these inputs provide highly reliable class-consistent predictions.
Teacher refinement improves clean and adversarial accuracy, and part of these gains is subsequently transferred to the final student.
}
    \label{fig:confusion-matrix}
\end{figure*}

\subsubsection{Confusion Matrix Analysis}
\label{subsec:app-confusion-matrix}

To examine the feasibility of weak-to-strong teacher refinement, we analyze whether inverse adversarial student predictions provide sufficiently reliable feedback to the teacher.

Figure~\ref{fig:confusion-matrix} presents normalized confusion matrices on inverse adversarial, clean, and adversarial CIFAR-10 inputs, using WRN-28-10 as the teacher and ResNet-18 as the student.
Both perturbed input types are generated using 10-step iterative optimization.

All models achieve over \(96\%\) accuracy on inverse adversarial inputs, with predictions strongly concentrated along the main diagonal. This indicates that inverse adversarial inputs preserve reliable class-consistent information for teacher refinement.
Refinement increases the teacher's clean and adversarial accuracy from \(84.57\%\) and \(58.32\%\) to \(84.63\%\) and \(59.10\%\), respectively. 
The final CARD student subsequently improves AdaAD from \(84.05\%\) to \(84.15\%\) on clean inputs and from \(56.57\%\) to \(56.74\%\) on adversarial inputs.

The refined teacher and final student show slight decreases on inverse adversarial inputs while remaining above \(96\%\). 
Because these perturbations are optimized to facilitate correct classification, the reduced benefit from them, together with improved adversarial accuracy, suggests less sensitivity to input perturbations.
Overall, consistent with Theorems~\ref{thm:feedback-transfer} and~\ref{thm:feedback-risk}, the results provide empirical support that reliable student feedback can be transferred to the teacher when it has a positive prediction margin and is sufficiently aligned with the teacher output.

\section{Conclusion}
In this work, we proposed \emph{Cyclic Adversarial Robust Distillation} (CARD), which extends conventional one-way adversarial robustness distillation into a closed teacher--student feedback loop consisting of strong-to-weak distillation, weak-to-strong teacher refinement, and subsequent gain transfer.
CARD uses inverse-adversarial student feedback and discrepancy-maximizing inputs to refine the teacher while retaining the base student-distillation objective.
We further provided theoretical conditions for reliable weak-to-strong feedback and showed experimentally that CARD improves adversarial robustness in most evaluated settings under both white-box and transfer-based black-box attacks.

CARD still has several limitations.
Its current evaluation is restricted to relatively small-scale image classification benchmarks and primarily considers \(\ell_{\infty}\)-bounded attacks, while the additional refinement and gain-transfer stages introduce extra training cost.
Moreover, the improvement is not uniform across all attacks and base distillation methods.
Future work will investigate more efficient refinement strategies, larger-scale models and datasets, broader threat models, and adaptive feedback mechanisms that can better determine when and how student knowledge should be used to refine the teacher.

\printcredits

\section*{Declaration of competing interest}
The authors declare that they have no known competing financial interests or personal relationships that could have appeared to influence the work reported in this paper.

\section*{Declaration of generative AI and AI-assisted technologies in the manuscript preparation process}

During the preparation of this work, the author(s) used GPT-5.6 Sol for polishing the expression. The author(s) reviewed and edited the output as needed and take full responsibility for the content of the published article.

\section*{Acknowledgments}
This work is supported by the Yunnan Research Project (Grant Nos. 202505AF350053, 202503AG380006, 202401-AT070474, 202501AU070059, 202403AP140021), National Natural Science Foundation of China (Grant Nos. 62562061, 62502422 and 62462067), and Yunnan Provincial Department of Education Science Research Project (Grant Nos. 2025J0006, 2024J0010 and 2025J0007).

\section*{Data availability}
Data will be made available on request.

\bibliographystyle{cas-model2-names}

\bibliography{cas-refs}

\end{document}